\def\eqref#1{equation~\ref{#1}}
\def\1{\bm{1}}
\def\vp{{\bm{p}}}
\def\vv{{\bm{v}}}
\def\mI{{\bm{I}}}
\def\mM{{\bm{M}}}
\def\mR{{\bm{R}}}
\DeclareMathAlphabet{\mathsfit}{\encodingdefault}{\sfdefault}{m}{sl}
\SetMathAlphabet{\mathsfit}{bold}{\encodingdefault}{\sfdefault}{bx}{n}
\def\sA{{\mathbb{A}}}
\def\sB{{\mathbb{B}}}
\def\sD{{\mathbb{D}}}
\def\sR{{\mathbb{R}}}
\def\sS{{\mathbb{S}}}
\def\sX{{\mathbb{X}}}
\DeclareMathOperator*{\argmax}{arg\,max}
\newcommand{\cmark}{\ding{51}}
\newcommand{\xmark}{\ding{55}}
\newtheorem{definition}{Definition}
\newtheorem{proposition}{Proposition}
\def\onedot{.\ }
\def\ie{\emph{i.e}\onedot}
\title{Diversity-Driven View Subset Selection for Indoor Novel View Synthesis}
\author{\name Zehao Wang \email zehao.wang@esat.kuleuven.be \\
      \addr ESAT - Processing Speech and Images \\
      KU Leuven
      \AND
      \name Han Zhou \email han.zhou@esat.kuleuven.be \\
      \addr ESAT - Processing Speech and Images \\
      KU Leuven
      \AND
      \name Matthew B. Blaschko \email matthew.blaschko@esat.kuleuven.be \\
      \addr ESAT - Processing Speech and Images \\
      KU Leuven
      \AND
      \name Tinne Tuytelaars \email tinne.tuytelaars@esat.kuleuven.be \\
      \addr ESAT - Processing Speech and Images \\
      KU Leuven
      \AND
      \name Minye Wu \email minye.wu@esat.kuleuven.be \\
      \addr ESAT - Processing Speech and Images \\
      KU Leuven
      }
\begin{document}

\maketitle

\begin{abstract}
  Novel view synthesis of indoor scenes can be achieved by capturing a monocular video sequence of the environment. However, redundant information caused by artificial movements in the input video data reduces the efficiency of scene modeling. To address this, we formulate the problem as a combinatorial optimization task for view subset selection. In this work, we propose a novel subset selection framework that integrates a comprehensive diversity-based measurement with well-designed utility functions. We provide a theoretical analysis of these utility functions and validate their effectiveness through extensive experiments. Furthermore, we introduce \textsc{IndoorTraj}, a novel dataset designed for indoor novel view synthesis, featuring complex and extended trajectories that simulate intricate human behaviors. 
  %Remarkably, experiments on \textsc{IndoorTraj} demonstrate that our framework consistently outperforms baseline strategies using only 5-20\% of the data. 
  Experiments on \textsc{IndoorTraj} show that our framework consistently outperforms baseline strategies while using only 5–20\% of the data, highlighting its remarkable efficiency and effectiveness.
  The code is available at: \url{https://github.com/zehao-wang/IndoorTraj}.

\end{abstract}

\section{Introduction}
\label{sec:intro}

Novel view synthesis using a monocular RGB camera in indoor environments holds significant practical value for real-world applications. This technology allows users to reconstruct indoor scenes from self-captured video data, making it particularly beneficial for VR/AR applications, such as virtual tours. However, these videos often contain lengthy sequences with redundant information caused by complex camera movements resulting from human actions. This redundancy makes scene modeling less efficient, leading to longer processing times. It is crucial to develop methods that efficiently select a subset of frames within memory constraints.
%These methods should preserve novel view synthesis performance comparable to using the full frame set.
These target methods aim to maintain novel view synthesis performance that is comparable to using the full frame set, while significantly reducing memory and computation demands.

In this paper, we formulate this problem as a combinatorial optimization task that aims to maximize novel view synthesis quality with a fixed subset size $K$. The function $z(\sS)$ serves as our utility function to measure the novel view synthesis quality, where $\sS$ is a subset of the full set $\sD$. This subset selection problem is expressed as follows:
\begin{equation}\label{eq:def}
\max_{\sS \subseteq \sD} \left\{ z(\sS): |\sS| \leq K\right\},
\end{equation}
The quality of novel view synthesis is typically evaluated using the Peak Signal-to-Noise Ratio (PSNR) and related metrics applied to synthesized 2D images. However, computing these metrics requires expensive reconstruction and rendering for each chosen subset~$\sS$. Moreover, selecting a subset of size $K$ that maximizes performance among all possible combinations poses an NP-hard challenge.  All of these computational complexity make it impossible to solve this subset selection problem in practice.

From the literature, several measures have been identified as useful for assessing reconstruction quality. Warping consistency~\citep{ye2024pvp} and reconstruction uncertainty~\citep{wu20153d,goli2024bayesray,pan2022activenerf} emphasize per-camera importance and are effective for Next-Best-View (NBV) selection tasks. The calculation of these measures no longer requires completing the full reconstruction process; they can be computed during several iterations of the reconstruction, significantly reducing computational intensity. 
Nevertheless, they still require substantial training and inference efforts, making them less ideal for the efficiency requirements of subset selection problems. Another category of measures, focusing on view diversity, is commonly used in SLAM and neural SLAM tasks~\citep{orbslam2015,sucar2021imap,yang2022vox,zhu2022nice}. These diversity-based methods focus on geometric criteria such as distance and frustum overlap, but fail to capture angle variation and scene semantics comprehensively. Building on these insights, we propose an efficient multifactor distance measure that combines pairwise distances across 3D, angular, and semantic spaces. This measure jointly promotes spatial coverage, directional diversity, and content variation, ensuring the selected views encompass a wide physical area, varied perspectives, and diverse scene content to better support view selection and scene reconstruction.

Given the distance measure and objective of our task, we define two utility functions $z(\sS)$ and apply an efficient greedy algorithm to maximize $z(\sS)$ by iteratively optimizing its marginal contribution. The first utility function is adopted from determinantal point process (DPP)~\citep{kulesza2012dpp}, a probabilistic model known for its effectiveness in selecting diverse subsets. The second is inducted from diversity related heuristics used in recommender system~\citep{carbonell1998mmr} and NBV~\citep{xiao2024nerfdirector} selection. While these heuristics have shown good empirical performance in other tasks, their effectiveness in the view subset selection problem has not been thoroughly explored. Additionally, we analyze a coverage-based utility function derived from \citet{kopanas2023gp}. We reformulate the first two utility functions using our distance measure and assess the effectiveness of all three both theoretically and empirically. 

Existing indoor scene datasets often feature fixed camera movement speeds or relatively short and simple trajectories, such as those in~\cite{straub2019replica,zhu2023i2sdf}. To enable more comprehensive experiments that better simulate complex human capture behaviors, we introduce a new dataset called \textsc{IndoorTraj}. Experimental results on \textsc{IndoorTraj} show that the best strategy derived from our framework consistently outperforms baseline strategies when using 5-20\% of the data. Moreover, training with just 10-20\% of the data selected by our strategy yields superior results compared to training on the full dataset with the same time constraints. This demonstrates its practical efficiency for indoor novel view synthesis.

We summarize our contribution as follows:
\begin{enumerate}
    \item We propose a framework for subset selection in novel view synthesis to address memory and time constraints encountered in real-life applications.  
    \item We propose a comprehensive measurement for the camera subset from the perspective of diversity, provide a theoretical analysis of the different utility functions, and validate their effectiveness through extensive empirical experiments. 
    \item We introduce \textsc{IndoorTraj}, a novel dataset designed for indoor novel view synthesis, featuring complex and extended trajectories that simulate intricate human behaviors.
   
\end{enumerate}

\section{Related work}
\label{sec:related work}

\subsection{Neural Rendering in Indoor Scenes}
The development of neural rendering methods enables efficient dense reconstruction of 3D scenes and realistic novel view synthesis from 2D images~\citep{lombardi2019neuralrendering,mildenhall2020nerf,mueller2022ingp,kerbl3DGS,wang2021neus}. Despite this progress, reconstructing indoor scenes poses distinct challenges. Indoor environments often contain significant occlusions between objects, textureless flat surfaces, and fine-grained small structures~\citep{zhu2023i2sdf}. These characteristics complicate data collection, leading to long and intricate trajectories that usually contain a lot of redundant information. While some approaches focus on optimizing the model structures~\citep{murez2020atlas,wang2023neus2}, another line of work addresses geometric difficulties by incorporating prior information, such as depth~\citep{yu2022monosdf,zhu2023i2sdf,xiang2024gaussianroom}, normal prior~\citep{yu2022monosdf,zhu2023i2sdf,wang2022neuris,xiang2024gaussianroom}, or semantic prior~\citep{guo2022manhattan}. In this work, we focus on efficiently utilizing cameras from a trajectory for neural rendering of indoor scenes. We design subset sampling strategies that enable the same base models~\citep{kerbl3DGS,mueller2022ingp} to achieve performance comparable to using the full frame set while utilizing only 10\% to 20\% of the data within a limited time constraint.

\subsection{Sampling in Neural Rendering Methods}
The reconstruction quality and convergence speed are strongly influenced by the sampling strategy. Sampling in 3D reconstruction can involve points, rays, or views. Earlier works such as~\citet{agarwal2011building,mendez2017taking} emphasize the need for an informative sampling strategy, particularly in large-scale and time-intensive traditional methods. Despite recent advancements in efficient neural rendering models, such as iNGP~\cite{mueller2022ingp} and 3DGS~\cite{kerbl3DGS}, the need for efficient sampling persists, especially when the memory or time has some limit in real applications. A line of work focus on importance sampling on the points level along the rays, such as the representative NerfAcc solution sets~\citep{li2023nerfacc}. Others explore sampling strategies at the ray level, as demonstrated in works such as~\citet{wu2023actray,sun2024efficient,pais2025probability}. However, relatively few works address the aspect of view subset sampling. 

The most relevant line of research in the context of view subset sampling is the Next-Best-View (NBV) selection task, where heuristics are designed to optimize local information gain iteratively. Uncertainty-based heuristics~\citep{pan2022activenerf,goli2024bayesray,lee2022uncertainty} or warping consistency related heuristics~\citep{ye2024pvp} are particularly notable. Nonetheless, these approaches often require intensive computation, as they depend on a full or partial rendering procedure for each added camera. Additionally, they encounter challenges in scenarios where cameras are more freely distributed, as discussed in~\citet{kopanas2023gp,lee2023sonerf}. An alternative branch of research employs diversity-related heuristics, focusing on factors such as spatial coverage~\citep{kopanas2023gp} or Euclidean distances between camera pairs~\citep{xiao2024nerfdirector,pan2022activenerf}. These strategies show promise for computational efficiency, but they are designed purely empirically, and the theoretical effectiveness of their objectives, combined with a greedy algorithm for identifying the best set of views, remains underexplored. In our work, we revisit this task as a subset selection problem. Specifically, we provide a comprehensive discussion of diversity-related measures in the context of view synthesis, exploring utility functions based on these measures and our task objective, leading to an efficient strategy for view subset sampling. Through both theoretical analysis and empirical studies, we demonstrate the effectiveness of our methods in novel view synthesis.

%%%%%%%%%%%%%%%%%%%%%%%%%%%%%%%%%%%%%%%%%%%
\section{Preliminary}
The following preliminary concepts will be used in the following sections. We will formally define and explain them below.

\begin{definition}[Submodularity]
\label{def:submodular}
A set function $z: 2^\sD \rightarrow \mathbb{R}$ defined on a finite set $\sD$ is called \emph{submodular} if it satisfies the diminishing returns property:

\[
z(\sA \cup \{k\}) - z(\sA) \geq z(\sB \cup \{k\}) - z(\sB)
\]

for all $\sA \subseteq \sB \subseteq \sD$ and $k \in \sD \setminus \sB$.
\end{definition}
This diminishing returns property of functions, enabling greedy algorithms to achieve effective and provably near-optimal solutions~\citep{nemhauser1978analysis,kulesza2012dpp} for the NP-hard tasks defined by Problem~\ref{eq:def}.

\begin{definition}[Monotonicity]
A set function $z: 2^\sD \rightarrow \mathbb{R}$ defined on a finite set $\sD$ is called non-decreasing if for all $\sA \subseteq \sB \subseteq \sD$, it holds that:
\[
z(\sA) \leq z(\sB).
\]
\end{definition}
This monotonicity~\citep{feige2007maximizing,nemhauser1978analysis} is often discussed alongside submodularity, as the combination of these two properties enable the derivation of performance bounds for algorithms designed to solve Problem~\ref{eq:def}. Specifically, for monotone submodular functions, the greedy algorithm~\ref{alg:1} provides an approximation guarantee by achieving an approximation ratio of $(e-1)/e$ (approximately 63.2\% of the optimal value), as demonstrated by~\citet[Sec.~4]{nemhauser1978analysis}. This algorithm is designed on marginal contribution of utility function $z(\sS)$.
\begin{algorithm}[t]
\caption{Greedy Algorithm for Maximizing Marginal Contribution}
\label{alg:1}
\begin{algorithmic}[1]
\Require Camera set $\sD$, utility function $z$, subset size $K$
\Ensure A subset $\sS \subseteq \sD$ such that $|\sS| = K$

\State $\sS \gets \emptyset$ \Comment{Initialize the solution set}
\For{$i = 1$ to $K$}
    \State $j^* \gets \argmax_{j \in \sD \setminus \sS}\ z(\sS \cup \{j\}) - z(\sS) $ %\Comment{Select the element with the largest marginal gain}
    \State $\sS \gets \sS \cup \{j^*\}$ \Comment{Add the selected element to $\sS$}
\EndFor
\State \Return $\sS$

\end{algorithmic}
\end{algorithm}

\begin{table}[t]
\caption{Distance measures that contribute to the diversity. Dist3D and Ang3D quantify spatial diversity, while the DistSem captures semantic diversity. }
\label{tab: similarity}
\centering
\begin{tabular}{l|l}
\toprule
\textbf{Factor} & \textbf{Formula} \\
\midrule
Dist3D & \(
f_{d}(i,j) = \exp\left(-\frac{\|\vv_i - \vv_j\|^2}{2\sigma^2}\right) \text{, where } \vv_i \text{ stands for the 3D position of camera }i
\) \\
\midrule
Ang3D & \(
f_{a}(i,j) = \frac{\text{trace}(\mR_i^T \mR_j) + 1}{4} \text{, where } \mR_i \text{ stands for the rotation matrix of camera } i
\)\\
\midrule
DistSem & 
\(
f_{p}(i,j) = \frac{\vp_{i} \cdot \vp_{j}}{\|\vp_i\| \|\vp_j\|} \text{, where } \vp_i \text{ stands for the image feature vector at camera } i
\) \\
\bottomrule
\end{tabular}
\end{table}

\subsection{Distance Measures}
\label{sec:fators}
We propose a multifactor distance measure that efficiently integrates spatial, angular, and semantic distances, ensuring further diverse and informative view selection. We formulate these measures in Table~\ref{tab: similarity}. In particular, Dist3D and Ang3D are two fundamental measures in the spatial domain that target on position diversity and angular diversity respectively. Additionally, semantic distance offers prior information that helps emphasize the selection of visually diverse cameras, which can further enhance the global semantic information entropy. All measures are normalized within the range of 0 to 1, where a value of 1 indicates minimal distance and 0 indicates maximal distance.

Some works have considered the Euclidean distance between views as a factor that relevant to 3D reconstruction quality~\citep{agarwal2011building,pan2022activenerf,xiao2024nerfdirector}, but rarely considers angular distance or semantic distance. We argue that neglecting this aspect can result in suboptimal outcomes. For example, when selecting two cameras out of three that provide similar observations and are positioned closely, as illustrated in the Ang3D scenario of Figure~\ref{fig: demo}, the Ang3D measurement can guide the selection towards the camera pair with the greatest angular separation. Therefore, our approach integrates this measure with Dist3D and DistSem to establish a comprehensive distance measure. This promotes diverse sampling results for the full view set $\sD$ of size $N$. We define the multifactor distance matrix $\mM_{\alpha,\beta}\in \sR^{N\times N}$ as follows, with each component calculated in Table~\ref{tab: similarity}:
\begin{equation} 
\label{equ_M}
\begin{split}
    \mM_{i,j} =  \alpha \cdot f_{d}(i,j) + \beta \cdot f_{a}(i,j) + \gamma \cdot f_{p}(i,j), \quad i,j=1,2,\ldots,N,
\end{split}
\end{equation}
where the corresponding weights satisfy $0 \leq \alpha, \beta, \gamma \leq 1$, $\alpha+\beta +\gamma= 1$. The impact of each factor on overall diversity can vary, depending on the specific weights assigned to them. 

\begin{figure}[t]
    \centering
\includegraphics[width=1\linewidth]{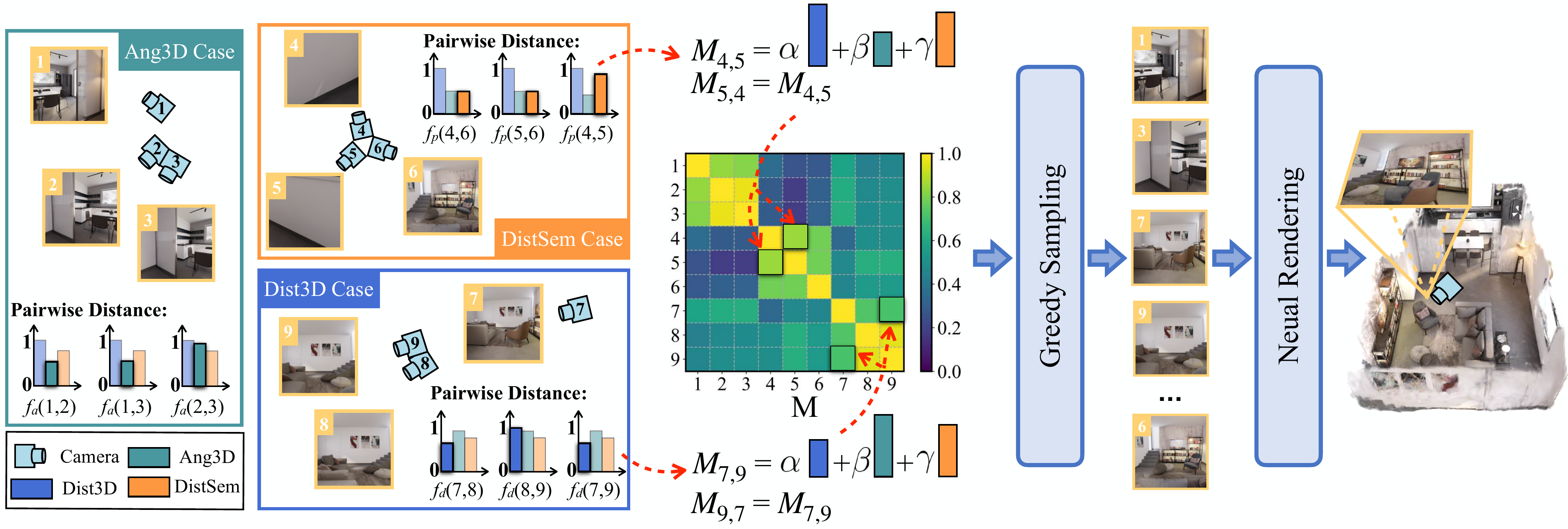}
    \caption{Data flow illustrating our subset selection framework. The depicted examples highlight how different factors in the distance measures influence the selection process when choosing two out of three cameras. The distance matrix is symmetric and integrates multiple factors, as defined in Table~\ref{tab: similarity}.
    }
    \label{fig: demo}
\end{figure}

%%%%%%%%%%%%%%%%%%%%%%%%%%%%%%%%%%%%%%%%%
% \section{utility functions and Their Properties}
\section{Utility Function Design for Subset Selection}
\label{sec:method}
In this section, we propose and analyze two utility functions as alternatives to $z(\sS)$ for the subset selection problem(Problem~\ref{eq:def}) based on our distance matrix $\mM$. Additionally, we analyze a utility function derived from a representative baseline~\cite{kopanas2023gp} with coverage measures.

\subsection{Log Determinant Function}
The first utility function is derived from Determinantal Point Process (DPP)~\citep{kulesza2012dpp}, which is a probabilistic model designed for diverse subset sampling. The likelihood of selecting a subset is proportional to the determinant of the principal minor (sub-matrix) determined by the subset. Specifically, DPP defines a discrete probability distribution over the power set $2^{\sD}$ through a positive semi-definite matrix $\mM \in \mathbb{R}^{n \times n}$. For a subset $\sS \subseteq \sD$, the probability of selecting $\sS$ is given by:
\begin{equation}
P(\sS) = \frac{\det(\mM_{\sS})}{\det(\mM + \mI)}
\end{equation}
where $\mM_{\sS}$ denotes the submatrix of $\mM$ with rows and columns selected according to the elements in $\sS$ and $\mI$ is the identity matrix. By definition, if $\mM$ is a distance matrix, the repulsive property of the determinant encourages the DPP to prefer diverse subsets.
\begin{proposition}
The log determinant function 
\begin{align}
    z_{\textnormal{DPP}}(\sS) =
\begin{cases} 
\log(\det P(\sS)) & \text{if } \sS \neq \emptyset, \\
1 & \text{if } \sS = \emptyset.
\end{cases}
\end{align}
is submodular. 
\end{proposition}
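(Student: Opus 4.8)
The plan is to reduce the claim to the classical submodularity of the map $\sS \mapsto \log\det(\mM_{\sS})$ and then establish that fact through a Schur-complement identity. First I would observe that, since $P(\sS) = \det(\mM_{\sS})/\det(\mM + \mI)$ is a scalar, we have $\log(\det P(\sS)) = \log\det(\mM_{\sS}) - \log\det(\mM + \mI)$, and the subtracted term is a constant independent of $\sS$. Because adding a constant leaves every diminishing-returns inequality of Definition~\ref{def:submodular} unchanged, it suffices to prove that $g(\sS) := \log\det(\mM_{\sS})$ is submodular on the nonempty subsets. I would also record that $\mM$ is positive semidefinite: each of the three factors in Table~\ref{tab: similarity} is a PSD kernel (a Gaussian kernel for $f_{d}$, a Gram matrix of Frobenius inner products for $f_{a}$ together with the PSD all-ones shift, and a cosine Gram matrix for $f_{p}$), and $\mM$ is their nonnegative combination with $\alpha,\beta,\gamma \geq 0$. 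Assuming mild nondegeneracy (or, if needed, replacing $\mM$ by $\mM + \epsilon\mI$), $\mM$ is strictly positive definite, so all principal minors are positive and the logarithms are well defined; this is the setting in which $\log\det$ submodularity is established in \citet{kulesza2012dpp}.

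The core step is to express the marginal gain of adding an element $k$ to a set $\sA$ via a Schur complement. Using the block-determinant identity on $\mM_{\sA\cup\{k\}}$,
\[
g(\sA \cup \{k\}) - g(\sA) = \log\!\left(\mM_{k,k} - \mM_{k,\sA}\,\mM_{\sA}^{-1}\,\mM_{\sA,k}\right),
\]
where $\mM_{k,\sA}$ and $\mM_{\sA,k}$ are the corresponding off-diagonal blocks. The quantity inside the logarithm is exactly the Schur complement of $\mM_{\sA}$ in $\mM_{\sA\cup\{k\}}$, which equals the conditional variance $\Var(X_k \mid X_{\sA})$ of coordinate $k$ for a centered Gaussian vector with covariance $\mM$. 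To finish, I would fix $\sA \subseteq \sB$ with $k \notin \sB$ and show that this conditional variance is non-increasing as the conditioning index set grows, i.e. $\Var(X_k \mid X_{\sA}) \geq \Var(X_k \mid X_{\sB})$. This follows from the law of total variance (conditioning on more variables cannot increase residual variance) or, purely linear-algebraically, from monotonicity of Schur complements under principal-submatrix extension for PSD matrices. Taking logarithms and re-adding the constant yields $z_{\textnormal{DPP}}(\sA \cup \{k\}) - z_{\textnormal{DPP}}(\sA) \geq z_{\textnormal{DPP}}(\sB \cup \{k\}) - z_{\textnormal{DPP}}(\sB)$ for all nonempty $\sA$.

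I expect the main obstacle to lie at the boundary rather than in the core inequality. The first point is well-definedness: the determinant argument needs every principal submatrix $\mM_{\sA}$ to be invertible, so I must justify strict positive definiteness of $\mM$ (or argue that the regularizer $\mM + \epsilon\mI$ does not affect the conclusion). The second, more delicate point is the case $\sS = \emptyset$: the stated convention $z_{\textnormal{DPP}}(\emptyset) = 1$ is not the value inherited from $\log P(\emptyset) = -\log\det(\mM + \mI)$, so the diminishing-returns inequalities involving $\sA = \emptyset$ are not automatically covered by the Schur-complement argument and must be checked directly against this convention. I would therefore treat the finitely many inequalities with $\sA = \emptyset$ as a separate verification step, while noting that all interior cases ($\sA \neq \emptyset$) are fully handled by the conditional-variance monotonicity above.
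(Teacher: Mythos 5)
The paper itself offers no argument here: it simply defers to Equation (74) of \citet{kulesza2012dpp}. Your write-up is a correct, self-contained reconstruction of exactly that classical fact --- drop the constant $-\log\det(\mM+\mI)$, write the marginal gain as $\log$ of a Schur complement, and use monotonicity of the conditional variance (equivalently, of Schur complements) as the conditioning set grows --- so in substance you are taking the same route as the cited source, just spelling it out. Your first caveat is right but benign: $\mM$ is indeed PSD (each factor in Table~\ref{tab: similarity} is a PSD kernel and $\alpha,\beta,\gamma\ge 0$), yet strict definiteness of every principal minor is not guaranteed (two coincident camera poses make $\mM_{\sS}$ singular), so the $\epsilon\mI$ regularization or a limiting argument is genuinely needed for the Schur complements to exist.

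Your second caveat, however, is not a technicality to be ``checked separately'' --- the check fails, and this is a flaw in the proposition as stated rather than in your argument. Submodularity of the log-determinant holds under the convention $\log\det(\mM_{\emptyset})=0$, i.e.\ $z_{\textnormal{DPP}}(\emptyset)=-\log\det(\mM+\mI)$ after subtracting the constant, whereas the paper sets $z_{\textnormal{DPP}}(\emptyset)=1$. Since the diagonal entries of $\mM$ equal $\alpha+\beta+\gamma=1$ and $\det(\mM+\mI)\ge 1$, taking $\sA=\emptyset$ gives $z_{\textnormal{DPP}}(\{k\})-z_{\textnormal{DPP}}(\emptyset)=-\log\det(\mM+\mI)-1\le -1$, while for a nonempty $\sB$ the marginal gain $z_{\textnormal{DPP}}(\sB\cup\{k\})-z_{\textnormal{DPP}}(\sB)$ equals the log of a Schur complement, which approaches $\log \mM_{k,k}=0$ whenever $k$ is nearly uncorrelated with $\sB$; the diminishing-returns inequality at $\sA=\emptyset$ is then violated. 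So your proof is complete and correct on nonempty sets, and the right conclusion for the boundary case is that the stated value $z_{\textnormal{DPP}}(\emptyset)=1$ must be replaced by the standard convention for the proposition to be true as a statement about all of $2^{\sD}$.
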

The proof of submodularity is provided in Equation (74) \citep{kulesza2012dpp}.

\subsection{Max-Min Distance Function}\label{sec:maxmin}

The second utility function is Max-Min Distance Function, we induct it from widely used heuristics in various domains. These include the farthest point sampling method in image acquisition~\citep{eldar1997fps}, Maximal Marginal Relevance(MMR) in information retrieval~\citep{carbonell1998mmr,lin2011class} and NBV methods~\citep{pan2022activenerf,xiao2024nerfdirector}. And this can be adapted to distance measure between views for subset selection task. First, we need to define the Domain for the utility functions used afterward,
\begin{definition}[Domain of utility functions Defined by the Marginal Gain]\label{defdomain}
Let \(\sX\) be the domain of a utility function defined by the marginal gain. A nonempty set \(\sS \in \sX\) if and only if there exists \(k \in \sS\) such that 
\begin{equation}
k = \argmax_{k^* \in \sD \setminus (\sS \setminus \{k^*\})} z(\sS) - z(\sS \setminus \{k^*\}) \text{ and } \sS\setminus{\{k\}} \subseteq \sX,
\end{equation}
where $z(\sS)$ is a utility function defined on set $\sS$.
\end{definition}
Then, following the way of definition discussed by \citet{lin2011class} and \citet{ashkan2015optimal}, we can define the utility function as:

\begin{definition}[Max-Min Distance Function]
Define the set function $z_{\textnormal{DF}}: \sX \to \mathbb{R}$, where the domain $\sX$ follows the Definition~\ref{defdomain} with $z_{\textnormal{DF}}$ as the utility function $z$. The function is recursively defined as:
\begin{equation}
z_{\textnormal{DF}}(\sS \cup \{k\}) - z_{\textnormal{DF}}(\sS) = \min_{i\in \sS} (1-\mM_{i,k}), 
\end{equation}
and $z_{\textnormal{DF}}(\emptyset)=0$. $\mM_{i,k}$ is a distance measure between elements $i$ and $k$, $0\leq \mM_{i,k} \leq 1$. 
\end{definition}

Using this definition, we establish the following proposition:
\begin{proposition}
The function $z_{\textnormal{DF}}$ is monotone and submodular.
\end{proposition}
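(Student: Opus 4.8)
The plan is to reduce both claims to a single object, the marginal gain
$\delta(k\mid\sS) := z_{\textnormal{DF}}(\sS\cup\{k\}) - z_{\textnormal{DF}}(\sS) = \min_{i\in\sS}(1-\mM_{i,k})$,
and to invoke the standard marginal-gain characterizations: $z_{\textnormal{DF}}$ is non-decreasing iff $\delta(k\mid\sS)\ge 0$ for every $\sS$ and $k$, and (by Definition~\ref{def:submodular}) it is submodular iff $\delta(k\mid\sA)\ge\delta(k\mid\sB)$ whenever $\sA\subseteq\sB$ and $k\in\sD\setminus\sB$. Both then follow from elementary facts about the minimum of the nonnegative quantities $1-\mM_{i,k}$.

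For monotonicity, I would first note that since $0\le\mM_{i,k}\le 1$ we have $1-\mM_{i,k}\ge 0$ for every pair, so $\delta(k\mid\sS)=\min_{i\in\sS}(1-\mM_{i,k})\ge 0$. Given $\sA\subseteq\sB$, writing $\sB\setminus\sA=\{k_1,\dots,k_m\}$ and telescoping along an admissible insertion order yields $z_{\textnormal{DF}}(\sB)-z_{\textnormal{DF}}(\sA)=\sum_{t}\delta(k_t\mid\cdot)\ge 0$, which establishes $z_{\textnormal{DF}}(\sA)\le z_{\textnormal{DF}}(\sB)$.

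For submodularity, the key inequality is that a minimum over a larger index set cannot be larger: since $\sA\subseteq\sB$, the collection $\{1-\mM_{i,k}:i\in\sA\}$ is contained in $\{1-\mM_{i,k}:i\in\sB\}$, hence $\min_{i\in\sB}(1-\mM_{i,k})\le\min_{i\in\sA}(1-\mM_{i,k})$, i.e. $\delta(k\mid\sB)\le\delta(k\mid\sA)$, which is exactly the diminishing-returns condition. The one edge case is $\sA=\emptyset$: the empty minimum should be read as the supremum of the distance range, namely $1$, so that $\delta(k\mid\emptyset)=1\ge\delta(k\mid\sB)$ holds for every $\sB$, consistently with $z_{\textnormal{DF}}(\emptyset)=0$.

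I expect the main obstacle to be not either inequality but the well-definedness of $z_{\textnormal{DF}}$ as a set function. The recursion specifies only marginal gains, and a short three-element check shows that the accumulated value can depend on the insertion order, so $z_{\textnormal{DF}}$ is not path-independent on all of $2^{\sD}$. This is precisely why the function is restricted to the domain $\sX$ of Definition~\ref{defdomain}, the sets reachable by the greedy rule. The careful part of the argument is therefore to verify that every comparison above involves only sets in $\sX$ and marginal gains for which $\min_{i\in\sS}(1-\mM_{i,k})$ is the genuine set-function difference; once this is secured, the two elementary inequalities deliver monotonicity and submodularity on $\sX$.
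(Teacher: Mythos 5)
Your proof is correct and takes essentially the same route as the paper: both properties are read off from the marginal gain $\min_{i\in\sS}(1-\mM_{i,k})$, using non-negativity of $1-\mM_{i,k}$ for monotonicity and the fact that a minimum over a superset can only decrease for submodularity. Your additional remarks on the possible path-dependence of the recursively defined $z_{\textnormal{DF}}$ and the role of the domain $\sX$ flag a genuine subtlety that the paper's own proof also leaves unaddressed, but they do not alter the core argument.
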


\begin{proof}
To establish submodularity, we verify the diminishing returns property. Let $\sA \subseteq \sB \subseteq \sD$ and consider an element $k \in \sD \setminus \sB$. Since $\sA \subseteq \sB$, it follows that any element in $\sA$ is also in $\sB$. Therefore, the set over which the minimum is taken for $\sB$ contains all elements in the set over which the minimum is taken for $\sA$. Such that the marginal contribution from adding $k$ to set $\sA$ and $\sB$ holds the following:
\begin{equation}
\min_{i \in \sA} (1 - \mM_{i,k}) \geq \min_{i \in \sB} (1 - \mM_{i,k}),
\end{equation}
which consistent with the diminishing returns property.

To establish monotonicity, i.e. $z_{\textnormal{DF}}(\sA) \leq z_{\textnormal{DF}}(\sB)$, note that $0 \leq \mM_{i,k} \leq 1$ by definition, which implies:

\begin{equation}
\min_{i \in \sB} (1 - \mM_{i,k}) \geq 0.
\end{equation}

Since the marginal contribution is non-negative for any $k \in \sD \setminus \sB$, the function $z_{\text{DF}}$ is monotone non-decreasing.
\end{proof}

\subsection{Uniform Coverage Function}\label{sec:cf}
Spatial coverage has been identified as a measure positively correlated with the quality of neural rendering, as highlighted by~\cite{kopanas2023gp}. To further explore its potential efficacy in the context of submodularity, this measure can be formalized into a utility function as follows:
\begin{definition}[Uniform Coverage Function]
Consider a finite set of disjoint regions in 3D space denoted by $\Omega$, where each view $i \in \sD$ covers a subset of regions $A[i] \in \Omega$ determined by its frustum. Define the set function $z_{\textnormal{CF}}: \sX \to \mathbb{R}$, where the domain $\sX$ follows the Definition~\ref{defdomain} with $z_{\textnormal{CF}}$ as the utility function $z$. The function is recursively defined as:
\begin{equation}\label{eq:ucf}
z_{\textnormal{CF}}(\sS \cup \{k\}) - z_{\textnormal{CF}}(\sS) =  \sum_{x \in \Omega} \left(\left(c_{\sS \cup \{k\}}(x)\right)^{\lambda} + \left(1-TV_{\sS \cup \{k\}}\left(x\right)\right)\right),
\end{equation}
where $\lambda$ is the scaling factor and $z_{\textnormal{CF}}(\emptyset)=0$. $c_{\sS \cup \{k\}}(x)$ denotes the ratio of views from $\sS \cup \{k\}$ that cover region $x \in \Omega$, formally given by:
\begin{equation}\label{eq:count}
c_{\sS \cup \{k\}}(x) = \frac{\sum_{i\in \sS \cup \{k\}} \mathds{1}_{\text{obs}}(i,x)}{|\sS \cup \{k\}|}.
\end{equation}
Function $\mathds{1}_{\text{obs}}(i,x)$ is an indicator function that is 1 if view $i$ cover the region $x$, and 0 otherwise.
The term \( TV_{\sS \cup \{k\}}(x) \) represents the total variation between the distribution of accumulated views over the angular space \( \Theta \) at region \( x \), given by:
\begin{equation}
TV_{\sS \cup \{k\}}(x) = \frac{1}{2} \sum_{y\in \Theta} |p(x,y) - u(x)|,
\end{equation}
where \( p(x, y) \) denotes the probability density function (PDF) for direction \( y \) at region \( x \), and \( u(x) \) represents the uniform PDF at \( x \).
\end{definition}

The function $z_{\textnormal{CF}}$ is monotone since $z_{\textnormal{CF}}(\sS \cup \{k\}) - z_{\textnormal{CF}}(\sS) \geq 0$. 

However, $z_{\textnormal{CF}}$ is non-submodular. This can be observed by analyzing the two primary components of Equation~\ref{eq:ucf}. When the set size $\sS$ increases, the value of Equation~\ref{eq:count} can either increase or decrease, depending on whether the region $x$ is covered by the newly added views. Consequently, the value of the first component $\sum_{x \in \Omega} \left(c_{\sS \cup \{k\}}(x)\right)^{\lambda}$ of Equation~\ref{eq:ucf} depends on the balance between the portions of the increase and decrease. This violates the diminishing return in Definition~\ref{def:submodular}. In addition, the $TV$ function in the second component $\sum_{x \in \Omega} \left(1-TV_{\sS \cup \{k\}}\left(x\right)\right)$ of Equation~\ref{eq:ucf} depends on how $k$ redistributes the views in the angular space. The marginal gain for larger set can exceed that for smaller set, if $k$ introduces uniformity to the larger set but increases non-uniformity for the smaller set. 

This implies that the greedy algorithm on this non-submodular utility function may prioritize elements with high immediate gains but low overall contribution to the global optimum. This is particularly evident in the clustering phenomenon observed in the subset selection results, as demonstrated in Experiment~\ref{sec: quality}.

\section{Experiments} \label{sec:exp}
\subsection{Dataset}
To accurately assess the impact of our sampling strategy in indoor scenes, we select the widely-used \textbf{Replica dataset}~\citep{straub2019replica} as one of our evaluation sets. However, in natural human motion captures, people may stop, move, or rapidly change direction. As a result, camera movements often do not maintain a consistent speed. These characteristics are not adequately captured by the Replica dataset. To complement this, we include two real-world datasets—\textbf{Mip-NeRF 360}~\citep{barron2022mipnerf360} and \textbf{ScanNet}~\citep{dai2017scannet} — both captured by professional annotators and sharing some of Replica's characteristics, but still limited in their motion complexity. To address these limitations, we create a new dataset called \textbf{\textsc{IndoorTraj}}, where the trajectories are captured within photorealistic indoor environments by real humans who have full control over the camera. The recorded trajectories feature long and complex camera movements. Additionally, a separate and disjoint testing trajectory is provided for each scene. This design, inspired by \citet{warburg2023nerfbusters}, ensures that the evaluation faithfully reflects the model's performance. More details can be found in the Appendix.~\ref{ap: dataset}.

\subsection{Baselines}

In the experiments, we primarily compare three baseline sampling strategies with our proposed method. \textbf{The Random strategy} samples a random subset from the full dataset. \textbf{The Uniform strategy} samples frames at equal time intervals throughout the sequence. 
%Starting from a random initial frame, this strategy maintains a constant time step, looping back if necessary, to ensure coverage of the entire sequence.
Starting from a random initial frame, this strategy samples frames at regular time intervals. If the end of the sequence is reached before selecting enough frames, it wraps around to continue sampling from the beginning.

The third baseline is about the greedy strategy with uniform coverage function, as introduced and analyzed in Section~\ref{sec:cf}. The heuristic of uniform coverage was initially proposed in the NBV selection problem by~\citet{kopanas2023gp}. This proposed heuristic aligns well with our subset selection framework. It quantifies 3D diversity using a composite coverage measure, which introduces strong dependencies in sequential sampling. We select it as a representative for its line of work, and it is denoted by the utility function $z_{\textnormal{CF}}$ in the experiment section.

\subsection{Implementation Details}
The experiments are conducted using a single NVIDIA P100 GPU for 30,000 iterations. We utilize the default hyperparameters recommended by the two classical neural rendering methods, instant-NGP (iNGP)~\citep{mueller2022ingp} and Gaussian Splatting (3DGS)~\citep{kerbl3DGS}. The hyperparameters $\alpha$ and $\beta$ of each factor in distance matrix $\mM$ controlled the general relative importance among different measures, we set up a separate evaluation set with the four out of eight scenes in the Replica dataset~\citep{straub2019replica} and empirically tune these two parameters. They are set to 0.7 and 0.2 respectively.

All the camera locations are scaled to the range [-4,4]. We set $\sigma=0.5$ in the Gaussian kernel Dist3D to ensure smooth distance measurements. For DistSem measure, the image features are encoded using CLIP~\citep{radford2021clip} trained on WebLI following \citet{zhai2023sigmoid}. Since the axis aligned bounding box (AABB) scale significantly impacts the performance of iNGP~\citep{mueller2022ingp}, we set the value to 16 for the Replica dataset and 4 for others to achieve reasonable reconstruction results.

\begin{table}[t]
\caption{\textbf{Results on synthetic scenes:} quantitative evaluation on the testing scenes. We provide average rendering PSNRs and the standard deviations on five random seeds while sampling. The evaluation is performed on four scenes from the Replica dataset and an additional four scenes from our \textsc{IndoorTraj} dataset. The camera selection ratio is set to 5\%. }%More results can be found in the Appendix~\ref{ap: exp full}. } 
\label{tab: main} 
\centering

\resizebox{1.\linewidth}{!}{
\begin{threeparttable}
\centering
\small
\setlength\tabcolsep{3pt}      
    \begin{tabular}{@{}ll|cccc|cccc} 
    \toprule   
    & Method & \multicolumn{4}{c|}{\textbf{Replica Dataset}}  & \multicolumn{4}{c}{\textbf{\textsc{IndoorTraj}}}  \\
    &  & office-2  & office-3 & room-1 & room-2 & kitchen-1 & kitchen-2 & openplan-1 & living-1 \\

    \midrule
    \parbox[t]{2mm}{\multirow{4}{*}{\rotatebox[origin=c]{90}{3DGS}}} & \shortstack{Random} &  38.29$\pm$ 0.59 & 39.65 $\pm$ 0.59 & 40.17 $\pm$ 0.66 & 42.23 $\pm$ 0.56 & 24.95 $\pm$ 0.33 & 23.5 $\pm$ 0.42 & 25.3 $\pm$ 0.22 & 24.4 $\pm$ 0.75 \\
    & \shortstack{Uniform} & 40.38 $\pm$ 0.14  & 41.09 $\pm$ 0.06  & 42.26 $\pm$ 0.12 & \textbf{43.33}$\pm$\textbf{0.07} & 25.45 $\pm$ 0.21 & 23.89 $\pm$ 0.2 & 26.12 $\pm$ 0.07 & 25.51 $\pm$ 0.2 \\
    & \shortstack{$z_{\textnormal{CF}}$} & 35.33 $\pm$ 0.00 & 37.92 $\pm$ 0.00 & 38.63 $\pm$ 0.00 & 39.64 $\pm$ 0.00 & 26.96 $\pm$ 0.00 & 24.0 $\pm$ 0.00 & 26.55 $\pm$ 0.00 & 25.74 $\pm$ 0.00 \\
    & \shortstack{$z_{\textnormal{DDP}}$} & 39.64 $\pm$ 0.38 & 40.84 $\pm$ 0.21 & 41.86 $\pm$ 0.31 & 42.84 $\pm$ 0.19 &27.03 $\pm$ 0.18 & 24.63 $\pm$ 0.37  & 26.41 $\pm$ 0.09 & 26.69 $\pm$ 0.29\\
    & \shortstack{$z_{\textnormal{DF}}$} & \textbf{40.46}$\pm$\textbf{0.20} & \textbf{41.25}$\pm$\textbf{0.11} & \textbf{42.32}$\pm$\textbf{0.16}  & 43.28 $\pm$ 0.22 & \textbf{27.44}$\pm$\textbf{0.09} & \textbf{25.58}$\pm$\textbf{0.27} & \textbf{26.78}$\pm$\textbf{0.05} & \textbf{27.09}$\pm$\textbf{0.32} \\

    \midrule
    \parbox[t]{2mm}{\multirow{4}{*}{\rotatebox[origin=c]{90}{iNGP}}} & \shortstack{Random} & 35.79 $\pm$ 0.64 & 36.84 $\pm$ 0.15 & 36.77 $\pm$ 0.69 & 38.61 $\pm$ 0.34 & 24.83 $\pm$ 0.23 & 21.95 $\pm$ 0.63 & 24.75 $\pm$ 0.1 & 21.59 $\pm$ 0.61 \\
    & \shortstack{Uniform} & 37.38 $\pm$ 0.07 & 37.43 $\pm$ 0.04 & 38.25 $\pm$ 0.14 & \textbf{39.22}$\pm$\textbf{0.07} & 25.49 $\pm$ 0.15 & 21.76 $\pm$ 0.22 & 25.12 $\pm$ 0.09 & 22.82 $\pm$ 0.28  \\
    & \shortstack{$z_{\textnormal{CF}}$} & 34.59 $\pm$ 0.00 & 36.12 $\pm$ 0.0 & 35.44 $\pm$ 0.00 & 37.67 $\pm$ 0.00 & 26.65 $\pm$ 0.00 & 23.22 $\pm$ 0.00 & 25.82 $\pm$ 0.00 & 22.61 $\pm$ 0.00 \\
    & \shortstack{$z_{\textnormal{DDP}}$} & 36.73 $\pm$ 0.35 & 37.24 $\pm$ 0.12 & 37.75 $\pm$ 0.40 & 38.91 $\pm$ 0.25 &26.45 $\pm$ 0.23 &23.14 $\pm$ 0.41 &25.44 $\pm$ 0.11 & 23.53 $\pm$ 0.30 \\
    & \shortstack{$z_{\textnormal{DF}}$} & \textbf{37.45}$\pm$\textbf{0.08} & \textbf{37.47}$\pm$\textbf{0.07} & \textbf{38.29}$\pm$\textbf{0.25} & 39.13 $\pm$ 0.07 & \textbf{26.96}$\pm$\textbf{0.14} & \textbf{23.30}$\pm$\textbf{0.15} & \textbf{25.86}$\pm$\textbf{0.09} & \textbf{24.02}$\pm$\textbf{0.10} \\
    
    \bottomrule 
    \end{tabular}
\end{threeparttable}
}
\end{table}

\begin{table}[t]
\caption{\textbf{Results on real-world scenes:} quantitative evaluation on the Mip-NeRF 360 indoor scenes and ScanNet scenes with 3DGS. The camera selection ratio for Mip-NeRF 360 is set to 20\% instead of our standard 5\% setting, as the original views are already sparse. }%More results can be found in the Appendix~\ref{ap: exp full}. } 
\label{tab: main real} 
\centering

\resizebox{1.\linewidth}{!}{
\begin{threeparttable}
\centering
\small
\setlength\tabcolsep{3pt}      
    \begin{tabular}{l|cccc|cccc} 
    \toprule   
    Method & \multicolumn{4}{c|}{\textbf{Mip-NeRF 360}}  & \multicolumn{4}{c}{\textbf{\textbf{ScanNet}}}  \\
     & bonsai  & counter & kitchen & room & scene0050\_00 & scene0073\_01 & scene0085\_00 & scene0134\_02 \\

    \midrule
    \shortstack{Random} & 26.91$\pm$0.60 & 23.66$\pm$0.46 & 25.45 $\pm$ 0.65& 26.07$\pm$1.12 & 20.83 $\pm$ 0.48 & 23.10 $\pm$ 0.42 & 25.57 $\pm$ 0.25 & 21.45 $\pm$ 0.48 \\
    \shortstack{Uniform} &  28.94$\pm$0.15  &  \textbf{24.82}$\pm$\textbf{0.25}   & 26.36 $\pm$ 0.48 & 26.57$\pm$0.56 & \textbf{22.43} $\pm$ \textbf{0.25} & 24.59 $\pm$ 0.17 & \textbf{26.58} $\pm$ \textbf{0.11} & 22.00 $\pm$ 0.95 \\
    \shortstack{$z_{\textnormal{CF}}$} & 28.07$\pm$ 0.00 &  24.33$\pm$ 0.00 &  26.45$\pm$ 0.00 &  27.21$\pm$ 0.00 & 20.58 $\pm$ 0.00 & 23.91 $\pm$ 0.00 & 6.27 $\pm$ 0.00 & 8.22 $\pm$ 0.00 \\
    \shortstack{$z_{\textnormal{DDP}}$} & 27.71$\pm$0.59 &  24.28$\pm$0.30  &  25.86$\pm$0.48  &  26.56$\pm$0.45  &21.25 $\pm$ 0.32 & 25.06 $\pm$ 0.29  & 26.05 $\pm$ 0.52 & 22.34 $\pm$ 0.87\\
    \shortstack{$z_{\textnormal{DF}}$} & \textbf{29.11}$\pm$\textbf{0.16}  & 24.79$\pm$0.39 & \textbf{26.67}$\pm$\textbf{0.50}  &  \textbf{27.34}$\pm$\textbf{0.46}  & 22.16 $\pm$ 0.32 & \textbf{25.82}$\pm$\textbf{0.27} & 26.52 $\pm$ 0.26 & \textbf{23.18}$\pm$\textbf{0.55} \\

    \bottomrule 
    \end{tabular}
\end{threeparttable}
}
\end{table}

\begin{figure}[t]
    \centering
    \includegraphics[width=0.90\linewidth]{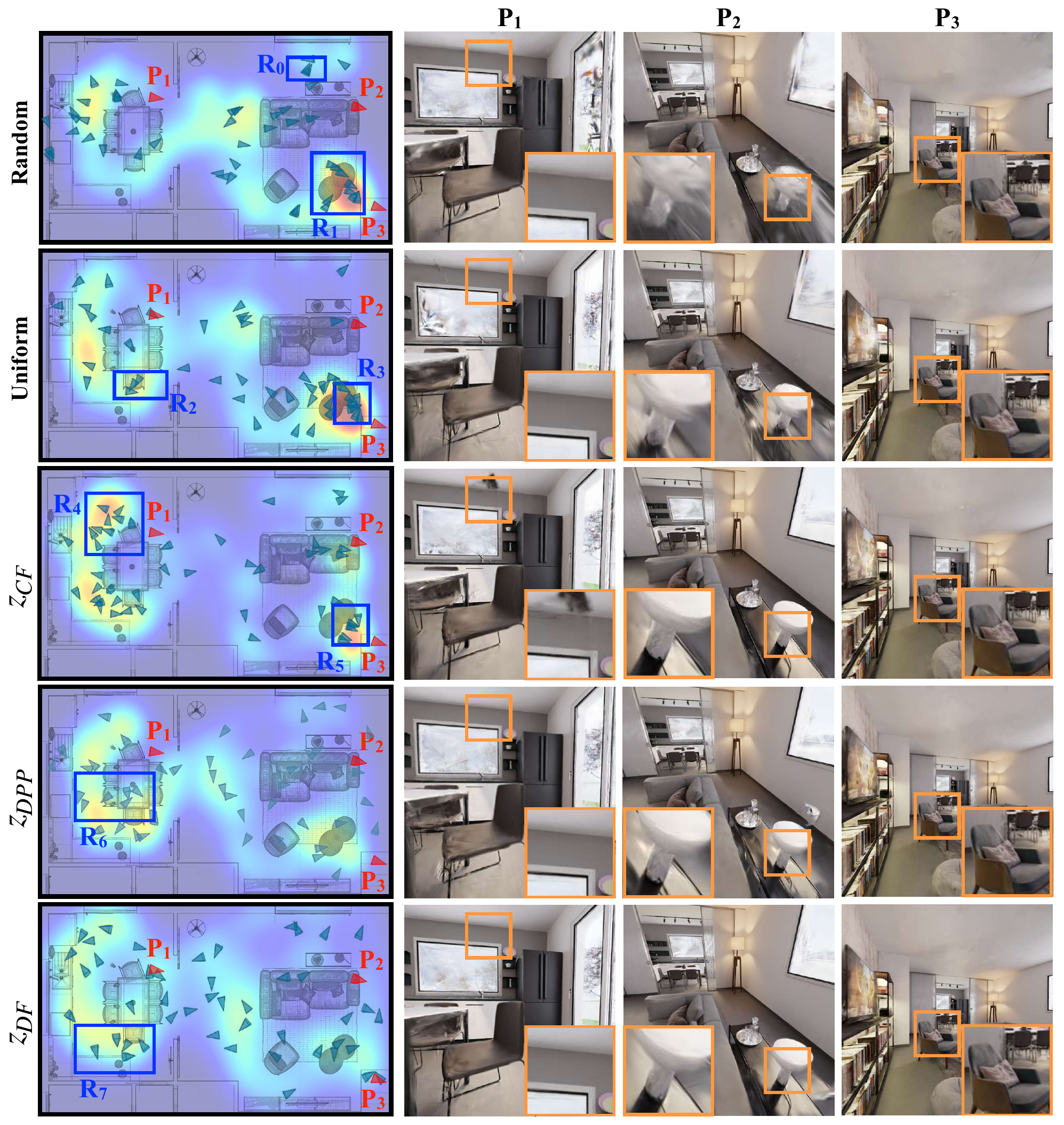}
    \caption{\textbf{Qualitative results on scene openplan-1} with a camera selection ratio of 5\%: The first column presents a top-down heatmap at the same scale, showing the density of selected camera positions. For clearer demonstration, a 30\% subset of the selected cameras from each method is drawn on the heatmap. The camera poses of the testing views marked in red are labeled as $P_1$, $P_2$, and $P_3$.}
    \label{fig: quality plot}
\end{figure}

\subsection{Main Results}

In Table~\ref{tab: main} and Table~\ref{tab: main real}, we evaluate the rendering quality of three baseline sampling strategies as well as ours with the utility function $z_{\textnormal{DDP}}$ and $z_{\textnormal{DF}}$ on both synthetic scenes and real scenes. Except for the greedy sampling with the utility function $z_{\textnormal{CF}}$~\citep{kopanas2023gp}, the subsets generated by the other four methods are influenced by random seed. To ensure robustness, we generate five sample sets for each method and report the average rendering Peak Signal-to-Noise Ratio (PSNR) along with the standard deviation. 

For the standard Replica dataset, the camera movement is smooth with a largely uniform speed within a small region. We can find from Table~\ref{tab: main} that the greedy strategy with $z_{\textnormal{DF}}$ performs comparably to the Uniform strategy, offering a slight performance improvement on average. In contrast, the greedy method with $z_{\textnormal{CF}}$~\citep{kopanas2023gp} underperforms significantly on the Replica dataset, showing an average of 2.25 PSNR drop even compared to the Random selection strategy. In real-world scenes, such as 5\% samples on ScanNet 0085\_00 and 0134\_02 in Table~\ref{tab: main real}, the $z_{\textnormal{CF}}$ utility function~\citep{kopanas2023gp} can sometimes lead to situations where there are not enough informative views for effective reconstruction.

On the more complex \textsc{IndoorTraj} dataset, the Uniform strategy is not effective anymore. The experiments show that, with the same sample size, the average PSNR gap between the Uniform strategy and our greedy solution with $z_{\textnormal{DF}}$ is approximately 1.46. Due to the more diverse and natural camera movement, the solution with $z_{\textnormal{CF}}$~\citep{kopanas2023gp} begins to outperform both the Uniform and Random strategies. However, it still results in an average 0.89 PSNR lower than the method with $z_{\textnormal{DF}}$. Specifically, in the scene \textit{living-1}, this utility function $z_{\textnormal{CF}}$ even has 0.9 PSNR worse than $z_{\textnormal{DDP}}$ and 1.5 PSNR worse than $z_{\textnormal{DF}}$. 

Notably, our greedy method using $z_{\textnormal{DDP}}$ consistently underperforms relative to $z_{\textnormal{DF}}$. This discrepancy can be attributed to two main measures. First, $z_{\textnormal{DDP}}$ lacks the monotonicity property, which limits its effectiveness during greedy sampling~\citep{kulesza2012dpp}. Second, numerical precision issues in floating-point computations may contribute to the difficulty in distinguishing samples that are extremely close in the feature space, potentially leading to suboptimal selections.

Additionally, we observe a consistently large performance gap between the Random strategy and the Uniform strategy. Intuitively, the Random strategy can lead to clusters of cameras with similar poses capturing redundant scene information. A more theoretical explanation is related to Quasi-Monte Carlo (QMC) theory~\citep{niederreiter1978qmc}, which demonstrates that the effectiveness of specific sampling patterns in a function approximation problem is theoretically superior to pure Monte Carlo sampling. We highlight this concept here as a reference for those interested in exploring this theory further.

\subsection{Qualitative Evaluation}
\label{sec: quality}

In addition to the quantitative evaluation, we also conduct an analysis of how the selection strategy behaves in the specific scenario. We select the scene openplan-1 as an example. This scene is the largest one within our \textsc{IndoorTraj} dataset, which includes a long training trajectory that spans two rooms.

\noindent\textbf{Camera distribution:} In the first column of Figure~\ref{fig: quality plot}, we present a top-down heatmap showing the density of selected camera positions. \textbf{Our solution with} $z_{\textnormal{DF}}$ shows a smoother coverage of the entire scene. 
%Even in relatively dense regions like $R_7$, the orientations of the cameras are distributed diversely throughout the area. 
Even in relatively dense regions like $R_7$, the selected cameras exhibit diverse orientations throughout the area.
Instead, \textbf{greedy solution with} $z_{\textnormal{CF}}$~\citep{kopanas2023gp} tends to densely select the cameras distributed on the corners of the scene, and there are multiple cameras having very similar poses (location and orientation) as seen in region $R_4$ and region $R_5$. This non-submodular metric prioritizes locally optimal solutions by favoring cameras whose frustum covers a larger portion of the scene's bounding box. \textbf{The non-monotone submodular metric} $z_{\textnormal{DDP}}$ exhibits a slight clustering effect in region $R_6$. The \textbf{Uniform strategy} largely relies on the moving speed of the camera. If the trajectory is captured with flexible speeds, as is common in user-generated data, the method will be biased toward the temporally dense area, such as region $R_2$ and region $R_3$. The \textbf{Random strategy} often samples cameras with similar poses, such as those in the region $R_1$, and even highly overlapping ones, as seen in $R_0$.

\noindent\textbf{Rendering Quality:} From the testing view at $P_1$ the observation aligns with what we identified with $z_{\textnormal{CF}}$. This metric tends to select cameras positioned at corners facing large open spaces, often overlooking scene boundaries. The slight clustering effect caused by the metric $z_{\textnormal{DDP}}$ results in poor quality in missing regions, such as the bottom of the chair in the $P_1$ view. For the Random and Uniform strategy, on the other hand, sampling is based directly on temporal factors. Consequently, both methods show a sparse distribution in the faster-moving parts of the scene. This sparse distribution causes a lack of cameras to render high-quality objects in these areas, such as the lamp in the testing view $P_2$. In the region around the testing view $P_3$, the Random, Uniform, and $z_{\textnormal{CF}}$ strategies concentrate more densely than our approach with  $z_{\textnormal{DDP}}$ and  $z_{\textnormal{DF}}$. However, despite the lower camera density, our strategies still achieve equally good rendering quality in this region. This outcome suggests a certain level of redundancy in the other three methods. 
%Overall, our strategy demonstrates a more efficient use of cameras, leading to better coverage of the environment.

\subsection{Performance Gaps with Varying Sample Ratios}

In this experiment, we assess the rendering performance of various sampling strategies across different sample ratios. Specifically, we evaluate sample ratios of [0.01, 0.03, 0.05, 0.10, 0.15, 0.20]. For reference, the performance of the model trained on the full dataset over 30,000 iterations is indicated by a dashed line in Figure~\ref{fig: 4figs}. The base neural rendering model used in these experiments is 3DGS~\citep{kerbl3DGS}.

As shown in Figure~\ref{fig: 4figs}, the approach with the utility function $z_{\textnormal{DF}}$ and $z_{\textnormal{DDP}}$ outperforms other strategies in three out of four scenes when sample ratio is larger than 0.03. The greedy solution with $z_{\textnormal{DF}}$ achieves on average 0.8 higher PSNR for kitchen-1 and kitchen-2 scenarios using just 20\% of data samples. Impressively, it matches or even exceeds the performance of models trained with the full dataset with the same 30k iterations. Notably, as shown in Figure \ref{4figs-a}, it surpasses the full dataset model by 0.2 PSNR using only 10\% of the data, and this advantage grows to 0.6 PSNR at 20\%. Similar trends are observed in Figure \ref{4figs-b} and \ref{4figs-d}. 

\begin{figure}[t]
\centering
\subfloat[Scene kitchen-1]{%
    \label{4figs-a} 
    \includegraphics[width=0.4\textwidth]{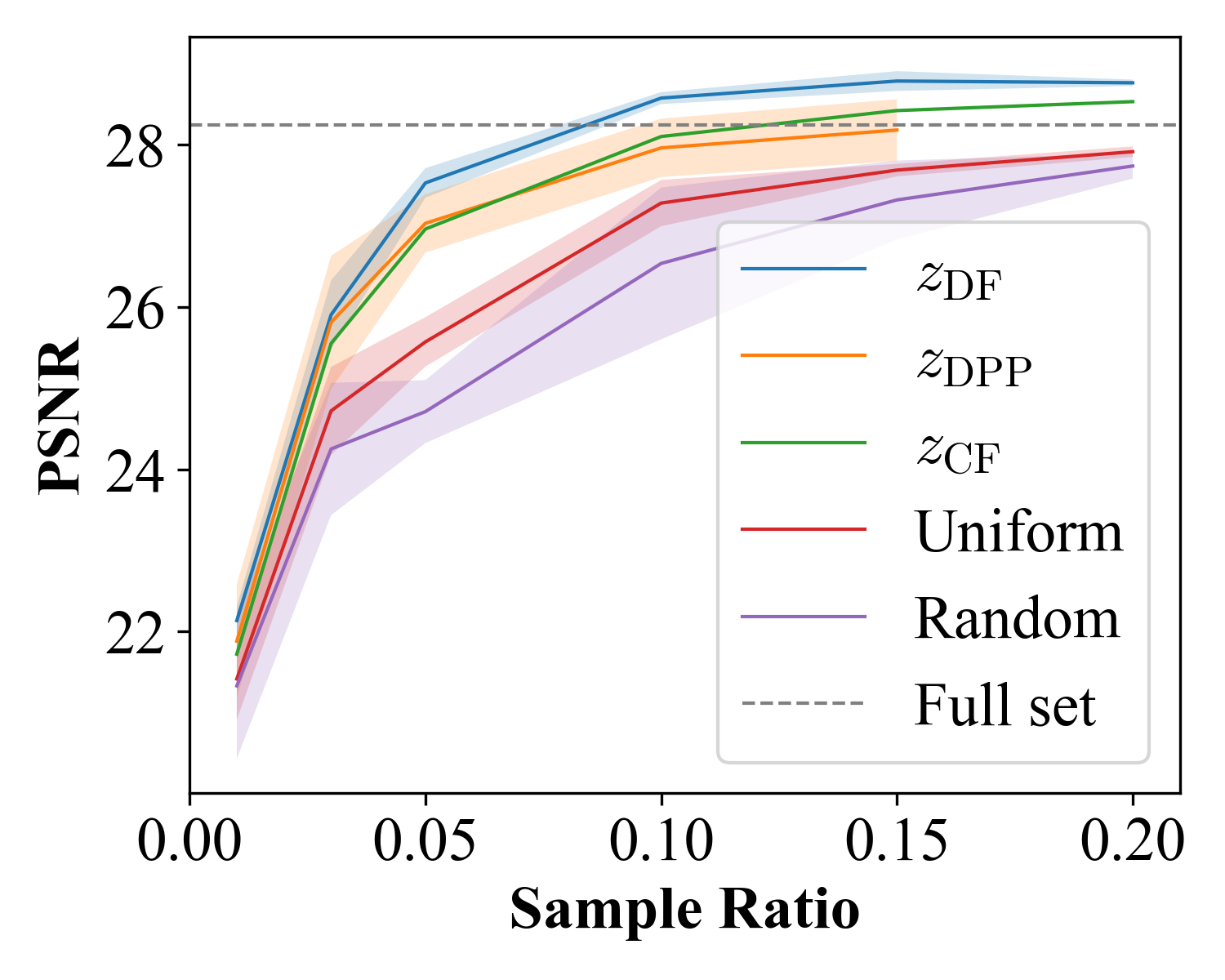}
}
\subfloat[Scene kitchen-2 ]{%
    \label{4figs-b} 
    \includegraphics[width=0.4\textwidth]{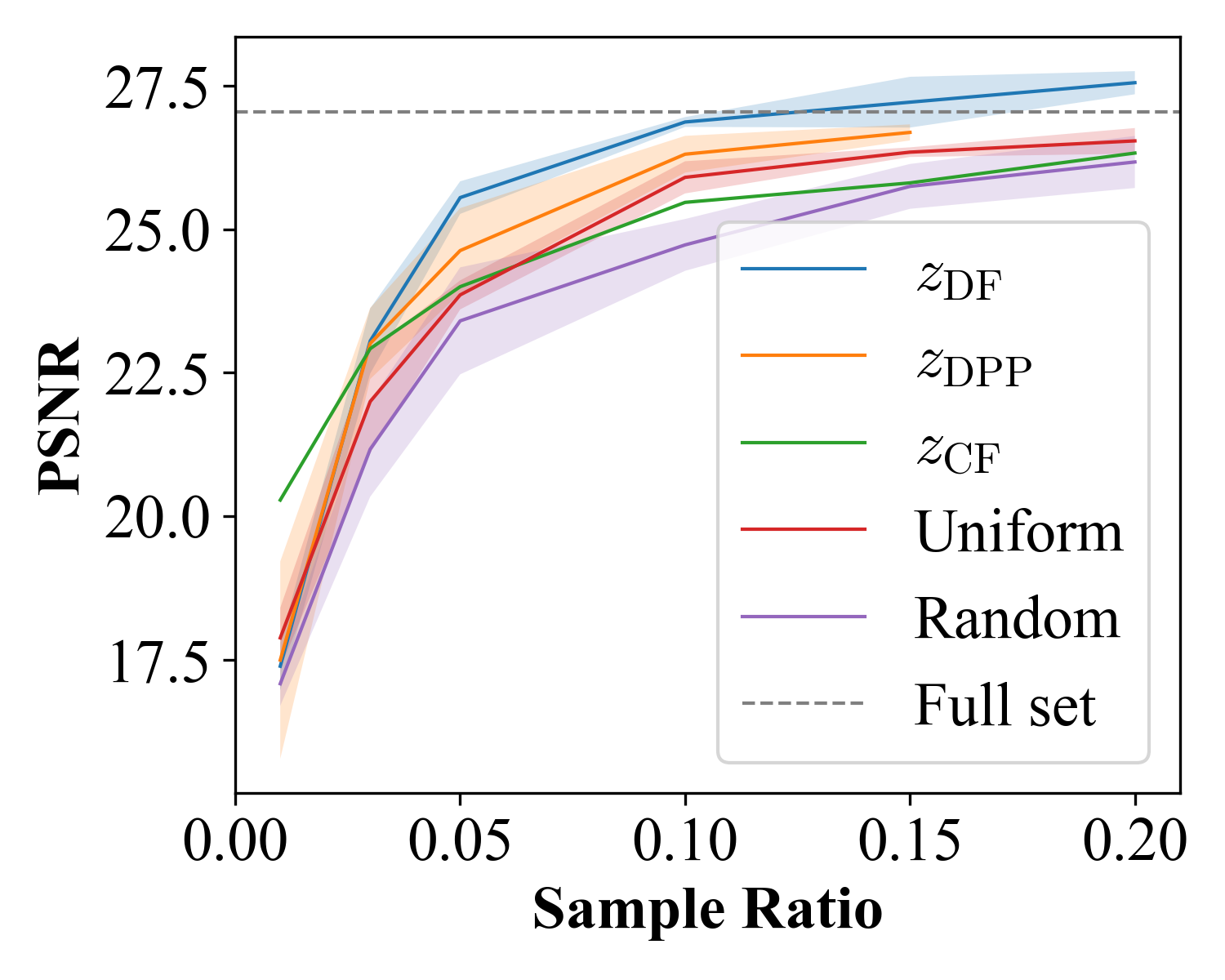}
}

\subfloat[Scene openplan-1]{%
    \label{4figs-c} 
    \includegraphics[width=0.4\textwidth]{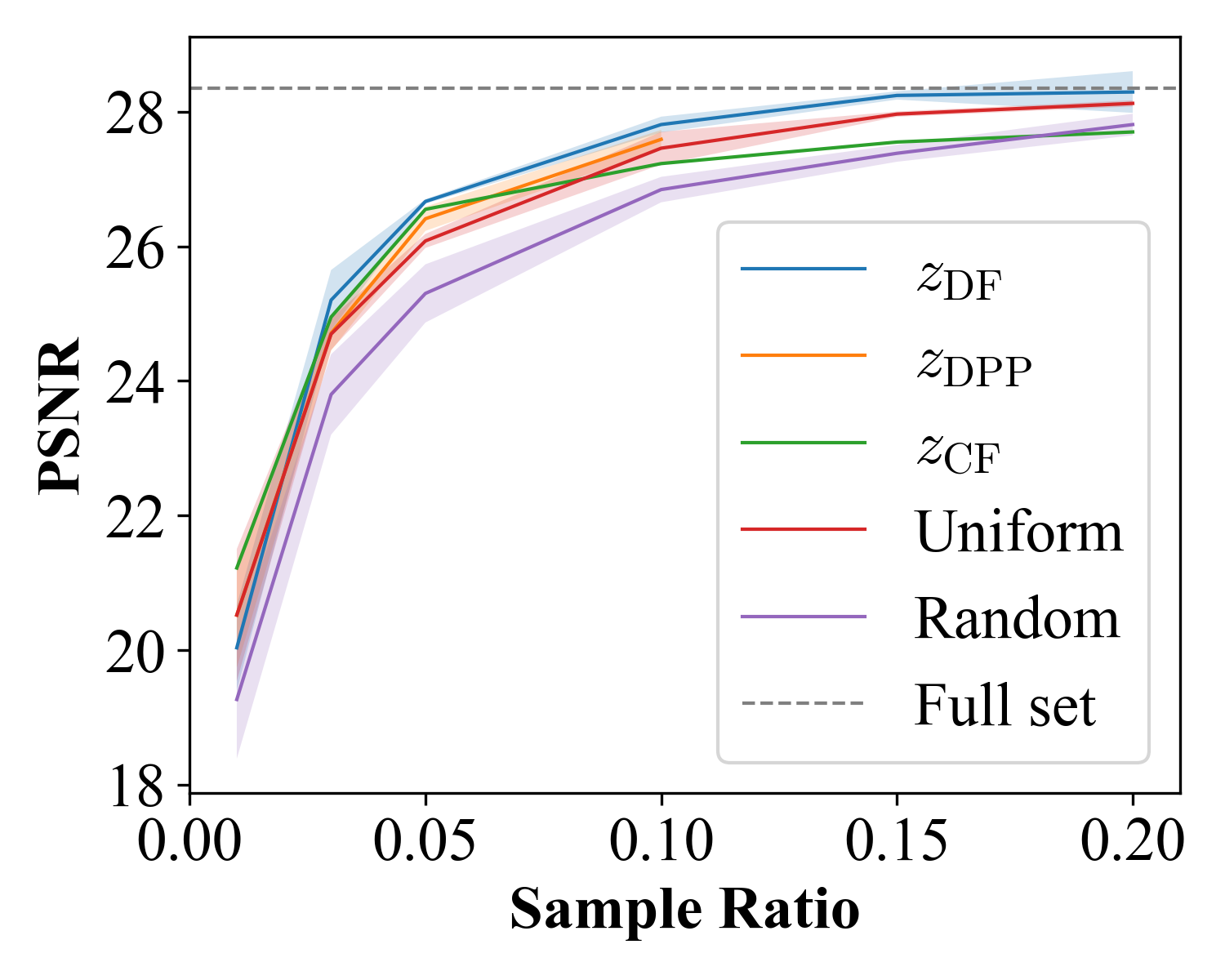}
}
\subfloat[Scene living-1]{%
    \label{4figs-d} 
    \includegraphics[width=0.4\textwidth]{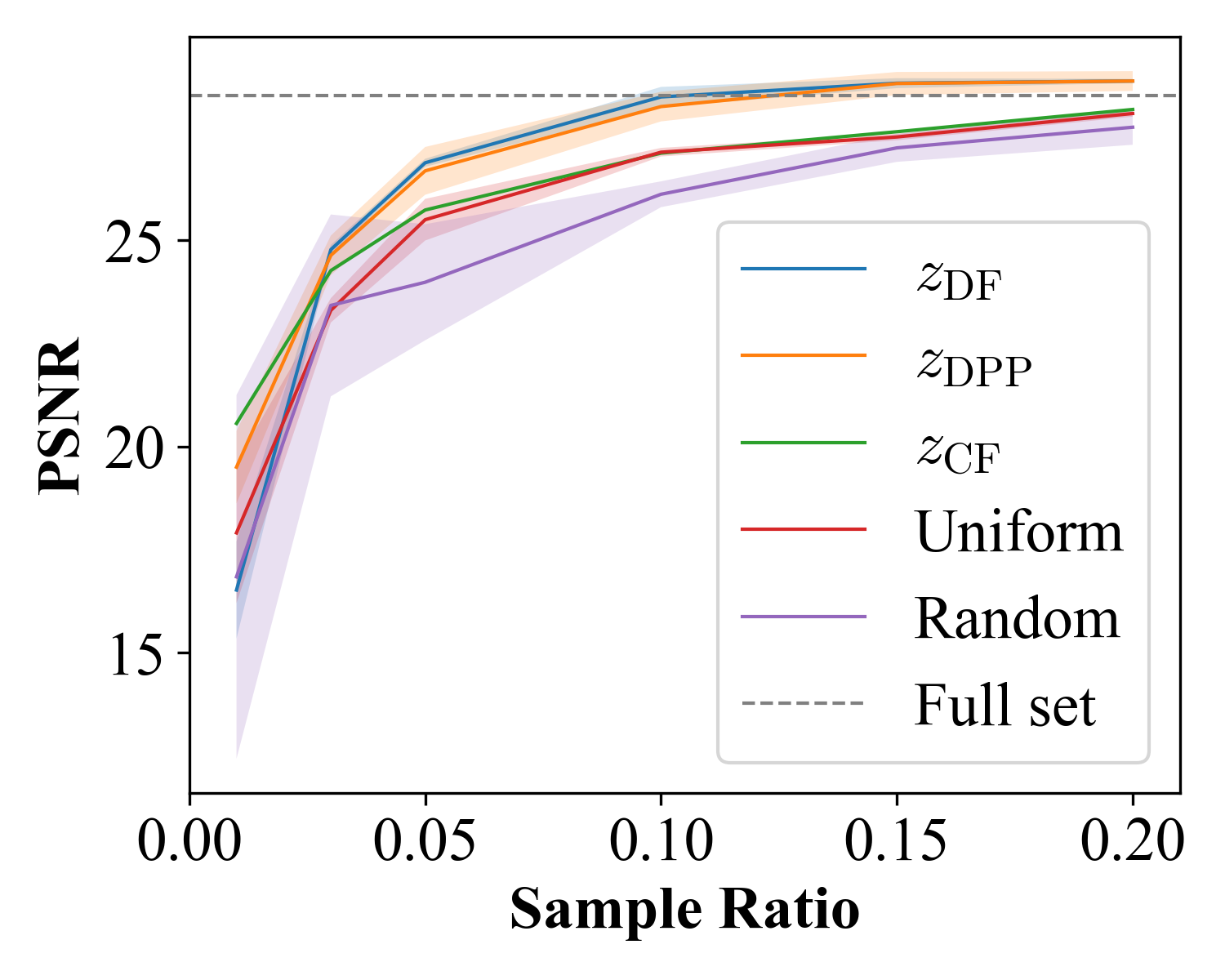}
}

\caption{\textbf{Performance across different sample ratios.} The experiments are set on \textsc{IndoorTraj} dataset. Results are based on 30k iterations of training using the Gaussian Splatting backbone across different sampling strategies.}
\label{fig: 4figs}

\end{figure}

The greedy approach with baseline utility function $z_{\textnormal{CF}}$ outperforms other strategies when the sample ratio is below 0.03. This result suggests that the frustum coverage measure is effective for extremely small subsets. However, its performance deteriorates as the sample ratio increases in most scenes. Additionally, this strategy becomes computationally expensive, with sampling time increasing exponentially with respect to the sample ratio. It requires over 100 times more sampling time compared to strategies using $z_{\textnormal{DF}}$ and $z_{\textnormal{DDP}}$. 

We can also observe that the solution with $z_{\textnormal{DDP}}$ is significantly constrained by the numerical precision issues. The metric becomes unsolvable even with 64-bit floating-point precision for an instance sample ratio of 0.2 in the cases of kitchen-1, kitchen-2, and openplan-1.

The underperformance of the model trained on the full dataset with a 30k iteration training budget suggests underfitting, which points to the presence of redundancy. We can also observe that the training trajectories across different scenes differ in the levels of redundancy, explaining why the performances converge at different sample ratios. These results highlight the necessity of crafting effective sampling strategies, especially for efficient novel view synthesis involving extensively long trajectories. Our method can serve as a solid baseline for such strategies.

\subsection{Abalation Study}
\label{sec: ab}

To better understand the contribution of each component in our multifactor distance matrix, we performed an ablation study. By comparing the results, we can quantify the importance of each aspect in the overall system for current evaluation data. The experiment setting follows the main experiment, and we report the average PSNR in Table~\ref{tab: ablation}.  We evaluate our best-performed solution with metric $z_{\textnormal{DF}}$. Each row in the table represents a different combination of measures included in the distance matrix.

From Table~\ref{tab: ablation}, we can observe that the model with all three measures consistently performs best across most scenes, indicating the importance of all three measures working together. For each individual measure, Dist3D proves to be the most informative across all scenes. For specific scenes, such as office-3, kitchen-1, and living-1, a distance matrix that includes only Dist3D achieves performance comparable to the multifactor version. We can observe that, the combination of AngDist and Dist3D typically yields the second/third-best rendering results, with a performance gap of less than 0.2 PSNR on average compared to the full distance matrix. Therefore, in real-world applications, prioritizing time efficiency, most cases can be effectively handled by combining AngDist and Dist3D.

\begin{table}[ht]
\caption{\textbf{Ablation study on distance measures.} We experiment various combinations of measures and compare the PSNRs of the rendering results. } 
\label{tab: ablation} 
\begin{center}
\begin{threeparttable}
\small
\setlength\tabcolsep{3pt}      
    \begin{tabular}{l|ccc|cccc|cccc} 
    \toprule   
    Method & \multicolumn{3}{c|}{\textbf{Measures}} & \multicolumn{4}{c|}{\textbf{Replica Dataset}}  & \multicolumn{4}{c}{\textbf{\textsc{IndoorTraj}}}  \\
     & DistSem & AngDist & Dist3D & office-2  & office-3 & room-1 & room-2 & kitchen-1 & kitchen-2 & openplan-1 & living-1  \\
    \midrule

   \parbox[t]{2mm}{\multirow{5}{*}{3DGS}} & \cmark & \xmark & \xmark & 38.93 & 40.74 & 40.91  & 42.24 & 22.57 & 23.4 & 25.33 & 23.83\\

   & \xmark & \cmark & \xmark & 38.26 & 40.20 & 41.45  & 42.89 & 24.68 & 22.14 & 25.77 & 23.12\\

   & \xmark & \xmark & \cmark & 40.3 & \underline{41.21} & 42.02  & 43.17 & \underline{27.41} & \underline{25.31} & 26.45 & \textbf{27.16}\\

   & \xmark & \cmark & \cmark & \underline{40.41} & 41.19 & \underline{42.26}  & \underline{43.24} & 27.38 & 25.26 & \underline{26.77} & 26.94\\
   
   & \cmark & \cmark & \cmark & \textbf{40.46} & \textbf{41.25} & \textbf{42.32}  & \textbf{43.28} & \textbf{27.44} & \textbf{25.58} & \textbf{26.78} & \underline{27.09}\\

   \midrule
   \parbox[t]{2mm}{\multirow{5}{*}{iNGP}} & \cmark & \xmark & \xmark & 35.82 & 37.3 & 37.28 & 38.58 & 24.02 & 22.2 & 24.86 & 22.14 \\

   & \xmark & \cmark & \xmark  & 35.61 & 36.89 & 37.66 & 38.9 & 24.93 & 20.75 & 24.87 & 21.59 \\

   & \xmark & \xmark & \cmark & 37.22 & 37.43 & 37.9 & 39.09 & 26.84 & 23.01 & 25.69 & 23.84  \\
   
   & \xmark & \cmark & \cmark & \underline{37.41} & \underline{37.45} & \underline{38.22} & \underline{39.11} & \underline{26.89} & \textbf{23.32} & \underline{25.81} & \underline{23.97} \\

   & \cmark & \cmark & \cmark & \textbf{37.45} & \textbf{37.47} & \textbf{38.29} & \textbf{39.13} & \textbf{26.96} & \underline{23.30} & \textbf{25.86} & \textbf{24.02} \\
    
    \bottomrule 
    \end{tabular}
    
\end{threeparttable}
\end{center}

\end{table}

\section{Conclusion}

This study introduces a novel view subset selection framework for indoor novel view synthesis from monocular frames. The framework leverages several well-defined metrics designed from diversity-based measurement, enabling a more informed and efficient selection of views. To enable more comprehensive experiments that better simulate complex human capture behaviors, we introduce a new dataset called \textsc{IndoorTraj}. Through theoretical analysis and extensive experimentation, we demonstrate the effectiveness and practicality of the proposed strategies, highlighting their ability to enhance efficiency and scalability in indoor scene modeling.

\section{Acknowledgement}

This project is supported by Flanders AI Research Program and Methusalem Lifelines. HZ is supported by the China Scholarship Council.

\clearpage

% \subsubsection*{Broader Impact Statement}
% In this optional section, TMLR encourages authors to discuss possible repercussions of their work,
% notably any potential negative impact that a user of this research should be aware of. 
% Authors should consult the TMLR Ethics Guidelines available on the TMLR website
% for guidance on how to approach this subject.

% \subsubsection*{Author Contributions}
% If you'd like to, you may include a section for author contributions as is done
% in many journals. This is optional and at the discretion of the authors. Only add
% this information once your submission is accepted and deanonymized. 

% \subsubsection*{Acknowledgments}
% This work is supported by the Flanders AI Research program. HZ is supported by the China Scholarship Council.

\bibliography{main}
\bibliographystyle{tmlr}

\appendix
\newpage
\clearpage
\section{\textsc{IndoorTraj} dataset}
\label{ap: dataset}

We collect data from four realistic indoor scenes in Blender, each featuring objects with varying surface roughness and natural light reflection effects. Each scene includes a long training trajectory that spans the entire space. Annotators are tasked with controlling a 6-DOF (Degrees of Freedom) camera to navigate through virtual environments. Camera translation is managed using the WASD keys on the keyboard, while rotation is controlled through mouse movement. The trajectories are recorded at 24 frames per second with 1024x1024 resolution. To ensure diversity in camera movements and capture preferences, the trajectories for the training and testing splits are recorded by different annotators. The dataset statistics are as follows:
\begin{table}[H]
    \centering
    \begin{tabular}{ccccc}
        \toprule  
         &kitchen-1 &  kitchen-2 & openplan-1 & living-1 \\
         \midrule
         Train & 2253 & 3084& 3212 & 2632\\
         Test & 133 & 99& 182 & 102 \\
         \bottomrule 
    \end{tabular}
    \caption{Data size for \textsc{IndoorTraj} dataset. We render 10 frames per second for training set and 1 frame per second for testing set.}
\end{table}

These scenes also feature diverse objects, such as semi-transparent glasses and highly reflective surfaces, as shown in the example frames in Figure~\ref{fig: dataset feat}. These characteristics present additional challenges for accurate novel view synthesis. In Figure~\ref{fig: dsets}, we provide the top-down trajectories of each scene and example rendering views. 

\begin{figure}[H]
    \centering
    \includegraphics[width=1.0\linewidth]{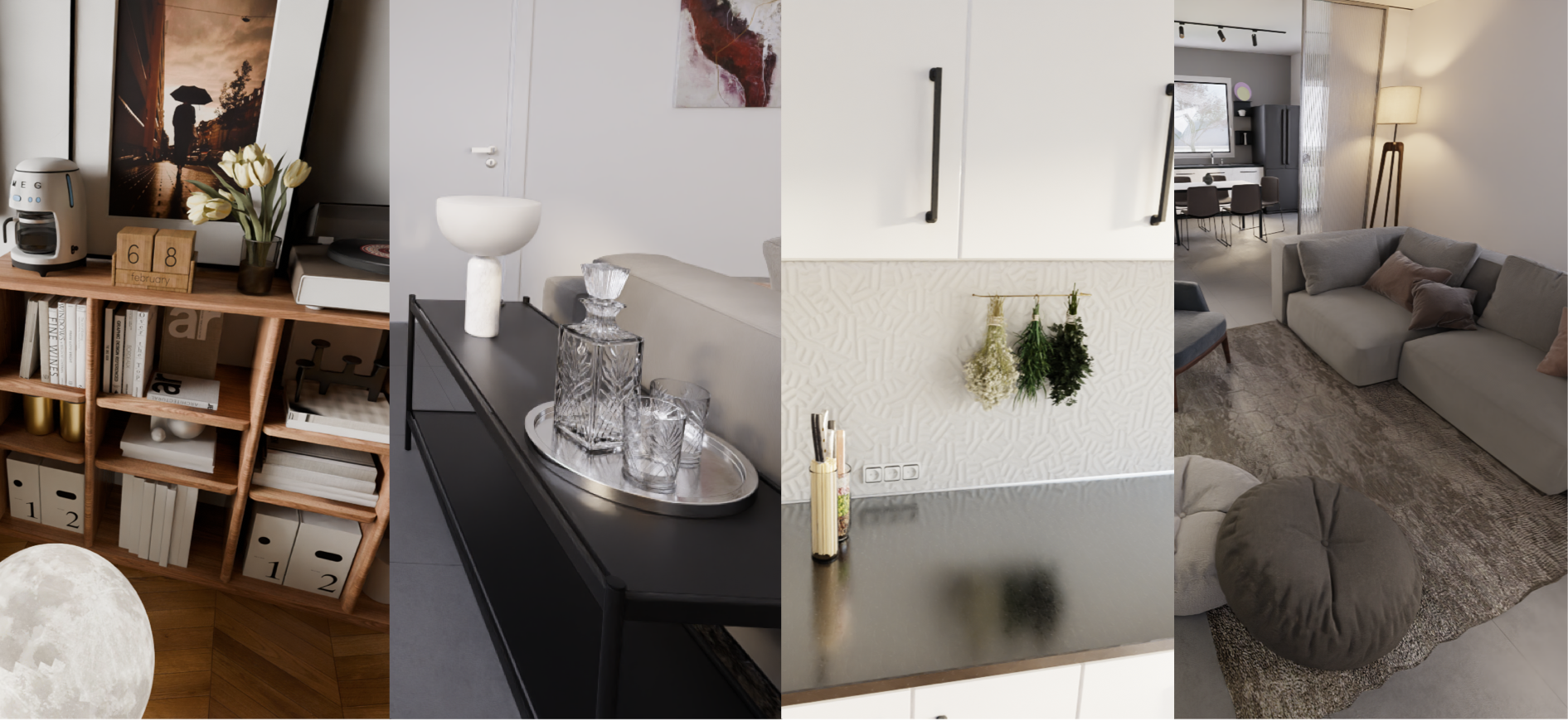}
    \caption{Visual characteristics in the \textsc{IndoorTraj} dataset, featuring object arrangement complexity, transparent objects, highly reflective surfaces, and areas with intricate textures.}
    \label{fig: dataset feat}
\end{figure}

\begin{figure}[p]
\centering
\begin{subfigure}{.8\textwidth}
\centering
    \includegraphics[width=1.0\textwidth]{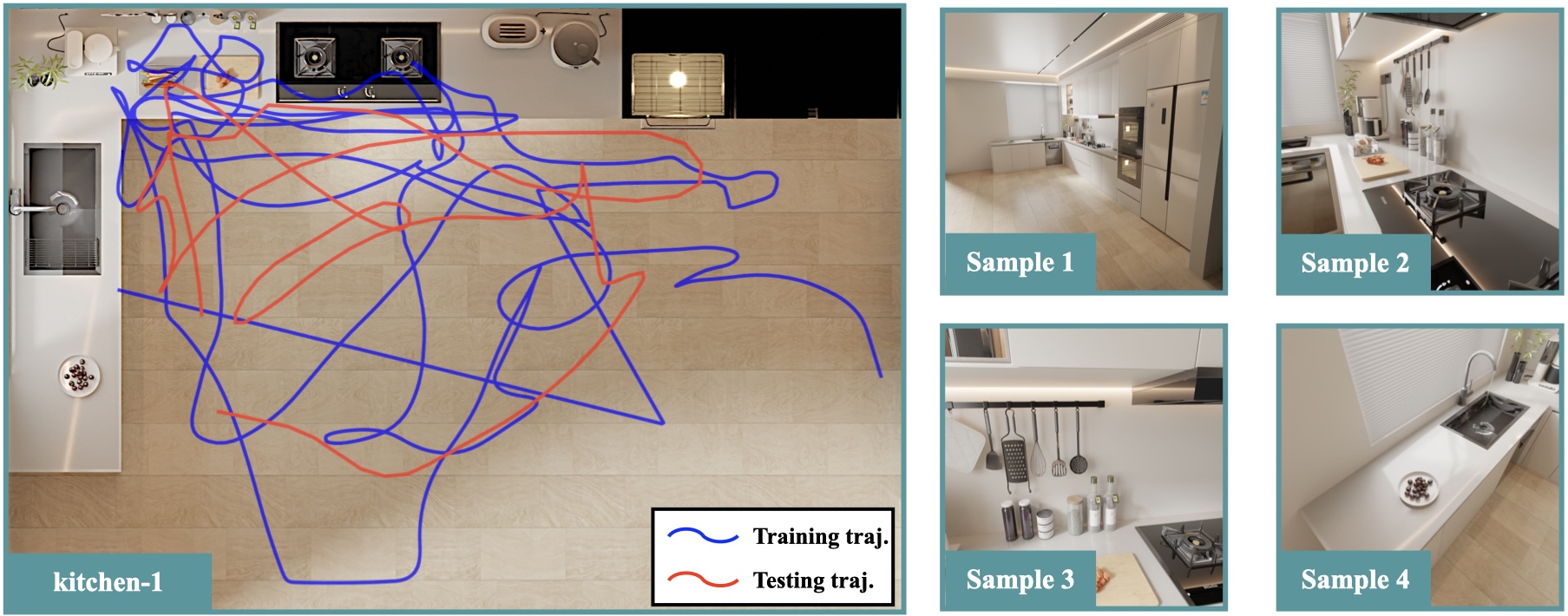}
\end{subfigure}%

\begin{subfigure}{.8\textwidth}
\centering
    \includegraphics[width=1.0\textwidth]{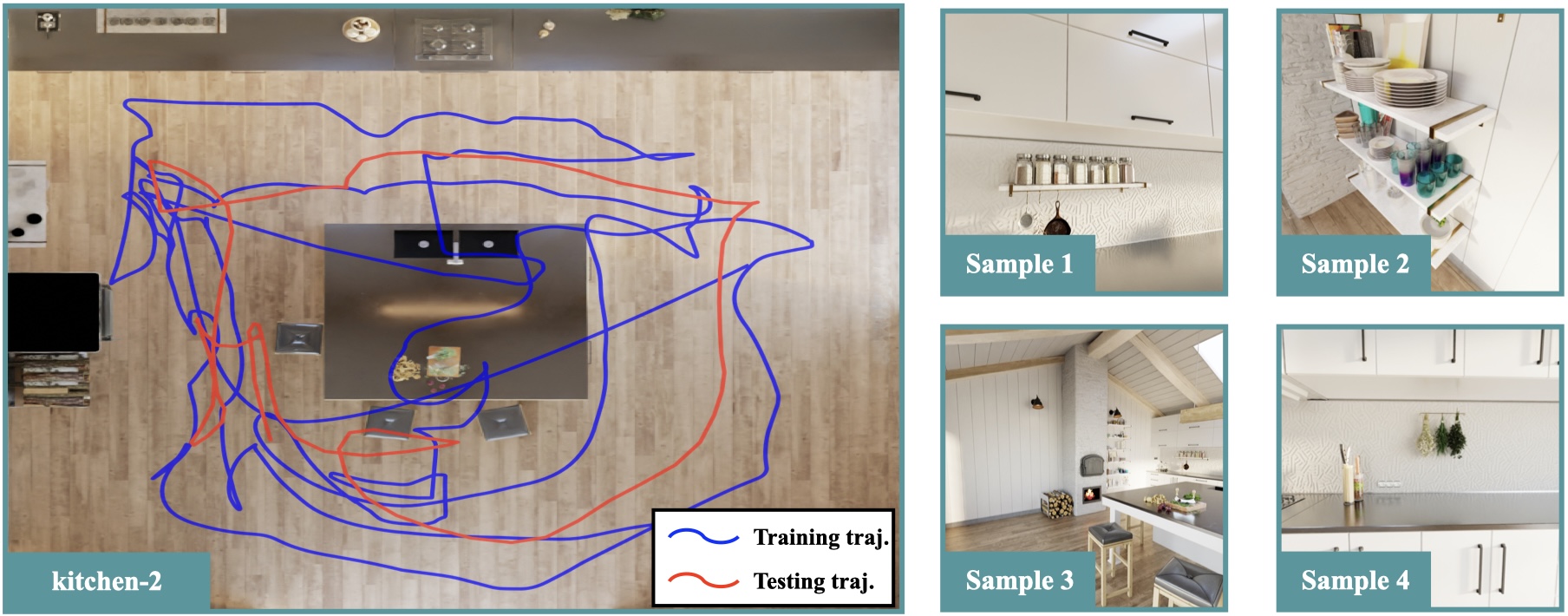}
\end{subfigure}%

\begin{subfigure}{.8\textwidth}
\centering
    \includegraphics[width=1.0\textwidth]{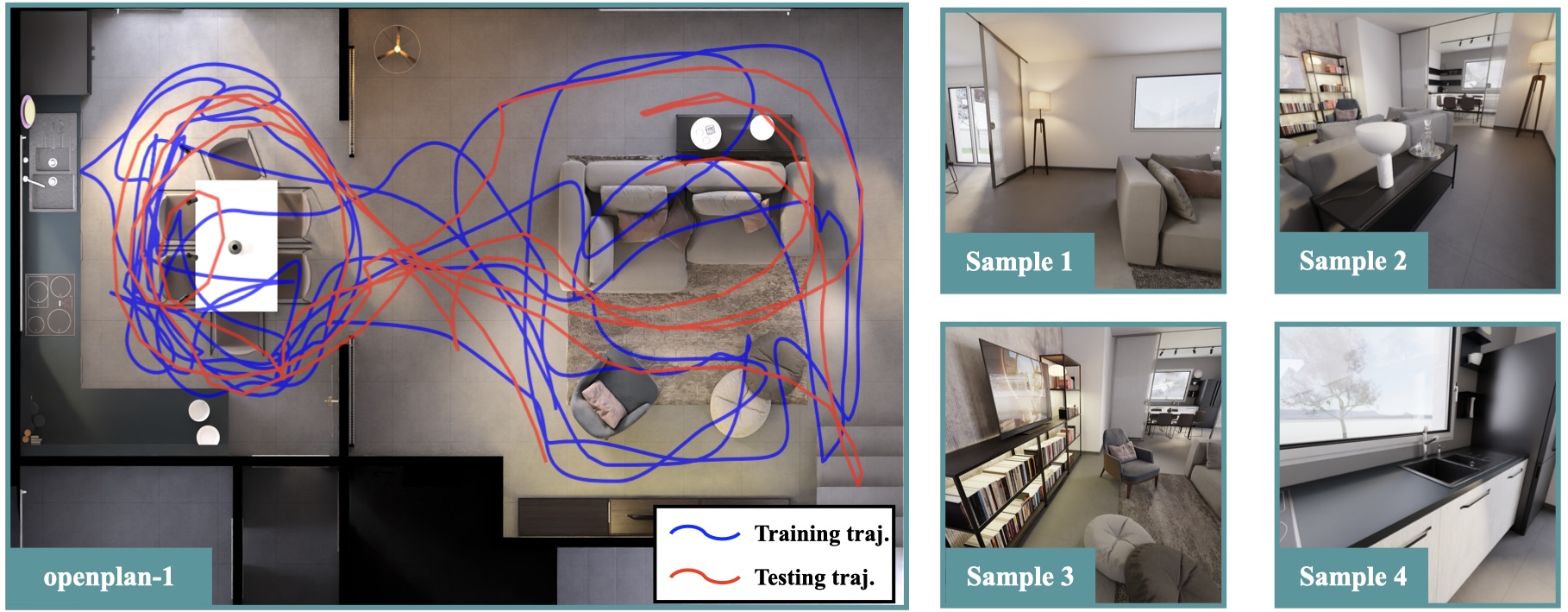}
\end{subfigure}%

\begin{subfigure}{.8\textwidth}
    \centering
    \includegraphics[width=1.0\textwidth]{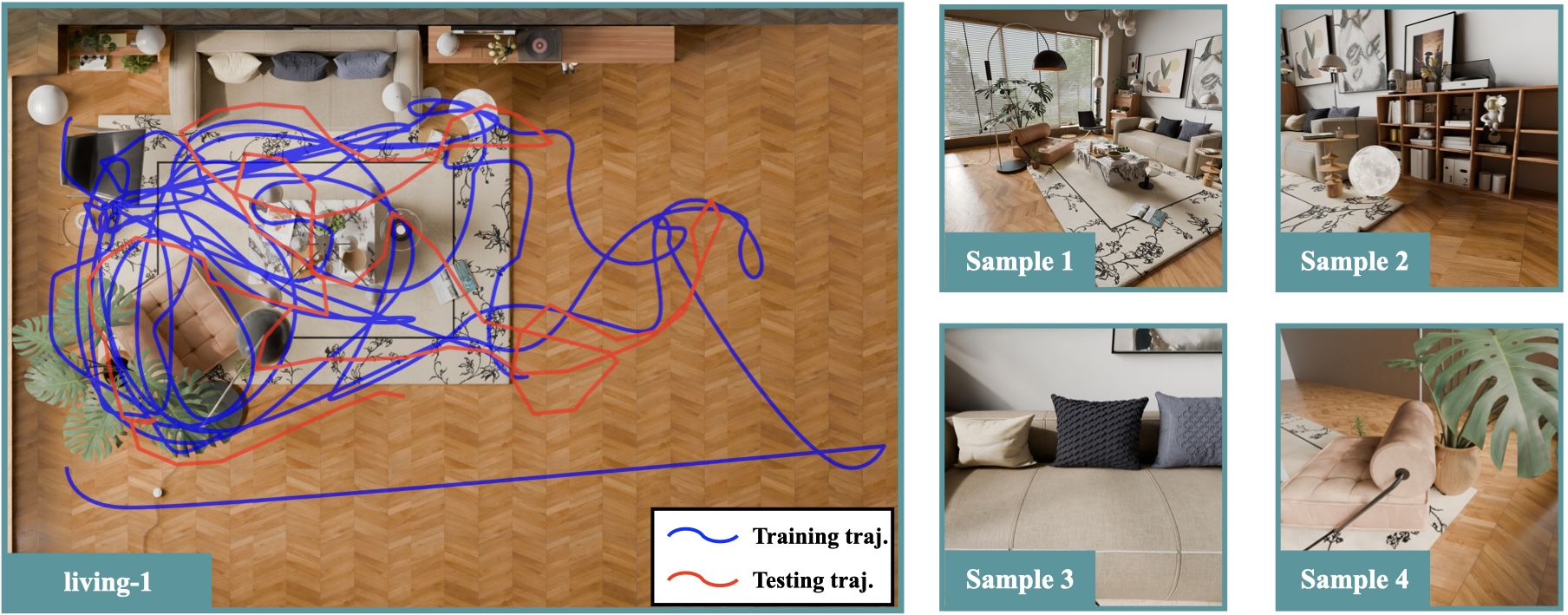}
\end{subfigure}
\caption{\textsc{IndoorTraj} dataset: Training and testing trajectories and example views}
\label{fig: dsets}
\end{figure}

\newpage

\section{Analysis on Hyperparameters}

We selected the hyperparameters through grid search on four tuning scenes in Replica, as mentioned in implementation details. We found that this set of parameters consistently achieves good performance across different datasets in the main results. For the discussion on parameter sensitivity, we provide grid search tables of two representative scenes, \ie Replica-Office4 and Replica-Office0 in Figure~\ref{fig: hypertune}.

Although different scenes under the same sample ratio perform different range of PSNRs, we still observe that the optimal parameters are generally $\alpha=0.7$, $\beta=0.2$, and $\gamma=0.1$. The results showed relatively low sensitivity to variations around the chosen parameters. Therefore, we empirically chose this set of parameters.

\begin{figure}[H]
\centering
\begin{subfigure}{.5\textwidth}
\centering
    \includegraphics[width=1.0\textwidth]{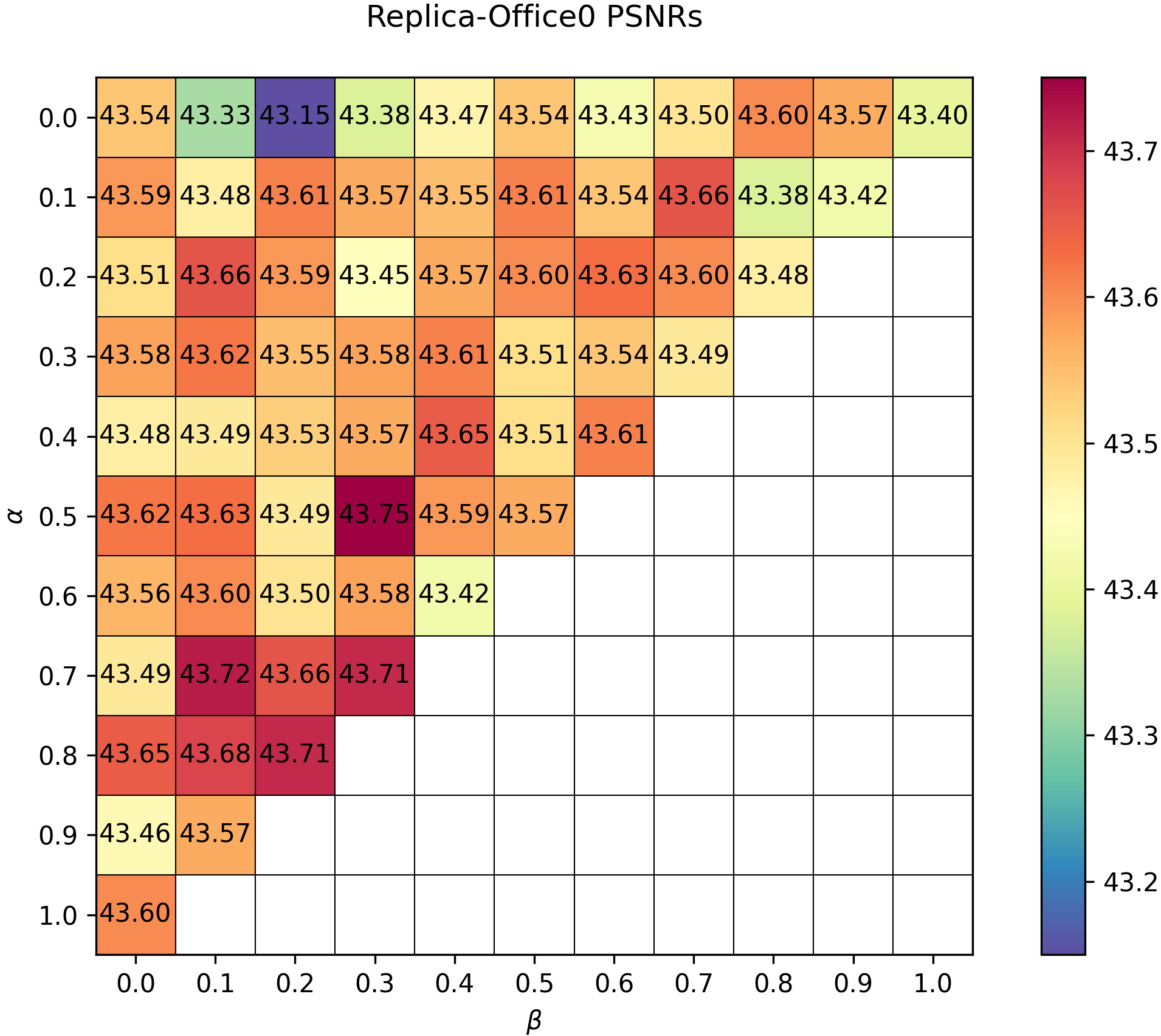}
\end{subfigure}%
\begin{subfigure}{.5\textwidth}
\centering
    \includegraphics[width=1.0\textwidth]{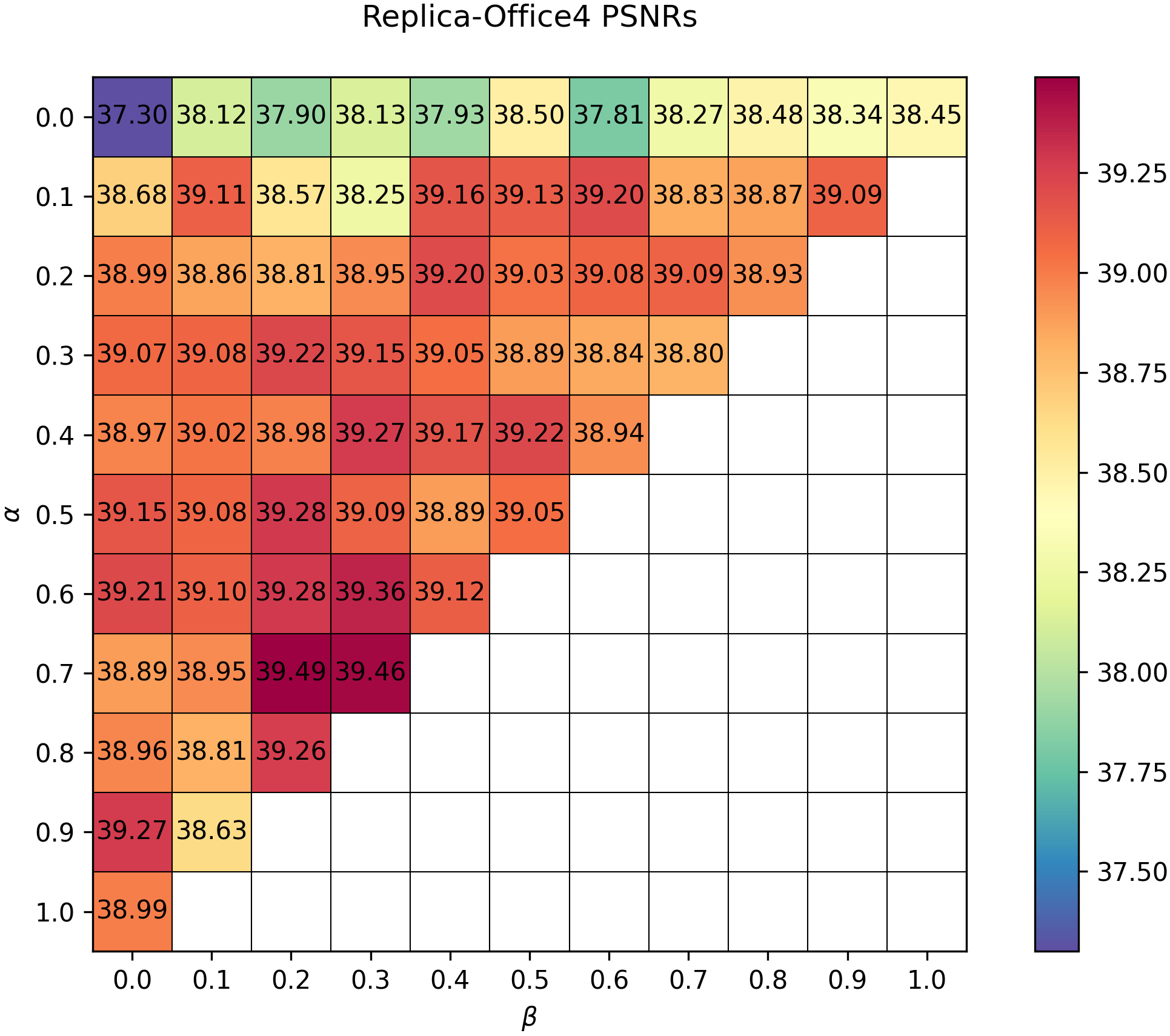}
\end{subfigure}%
\caption{PSNRs grid search table for hyperparameters on tuning scenes Replica Office0 and Office 4.}
\label{fig: hypertune}
\end{figure}

\section{Time efficiency}
Regarding the subset sampling time, we provide a reference table below. The sampling time for the method $z_{\textnormal{CF}}$ proposed in ~\cite{kopanas2023gp} is significantly higher. Applying this method in real-world scenarios may require additional effort to optimize its efficiency at the code implementation stage. For $z_{\textnormal{DPP}}$
 and $z_{\textnormal{DF}}$, the sampling times are slightly slower than Uniform and Random sampling, but this introduces only minor delay to the overall training time.

 \begin{table}[H]
    \centering
    \begin{tabular}{lcccc}
        \toprule
        & \textbf{Replica} & \textbf{IndoorTraj} & \textbf{ScanNet} & \textbf{MipNeRF360} \\
        \midrule
        Random & $<$1s & $<$1s & $<$1s & $<$1s \\
        Uniform & $<$1s & $<$1s & $<$1s & $<$1s \\
        $z_{\textnormal{CF}}$ & 239s & 325s & 184s & 96s \\
        $z_{\textnormal{DPP}}$ & 2.32s & 6.18s & 1.02s & $<$1s \\
        $z_{\textnormal{DF}}$ & 1.73s & 2.78s & 1.48s & $<$1s \\
        \bottomrule
    \end{tabular}
\end{table}

\end{document}